\newcommand{\xmark}{\ding{55}}%
\renewcommand{\>}{\rightarrow}
\newcommand{\<}{\leftarrow}
\newcommand{\ind}[1]{\mathbf{1}\br{#1}}
\newcommand{\B}{{\cal B}}
\newcommand{\R}{{\mathbb R}}
\newcommand{\T}{{\cal T}}
\newcommand{\pdsgd}{{\bfseries SPADE}\xspace}
\newcommand{\am}{{\bfseries AMP}\xspace}
\newcommand{\amsgd}{{\bfseries STAMP}\xspace}
\newcommand{\spmb}{{\bfseries 1PMB}\xspace}
\newcommand{\1}{{\mathbf 1}}
\newcommand{\X}{{\mathcal X}}
\newcommand{\Y}{{\mathcal Y}}
\newcommand{\btheta}{{\boldsymbol \theta}}
\newcommand{\A}{{\mathcal A}}
\newcommand{\D}{\mathcal D}
\newcommand{\W}{{\mathcal W}}
\newcommand{\barw}{{\overline \w}}
\newcommand{\Pf}{{\mathcal P}}
\renewcommand{\vec}[1]{{\mathbf{#1}}}
\newcommand{\veczero}{\vec{0}}
\newcommand{\vz}{\veczero}
\newcommand{\x}{\vec{x}}
\newcommand{\w}{\vec{w}}
\renewcommand{\v}{\vec{v}}
\newcommand{\ba}{\vec{a}}
\newcommand{\bb}{\vec{b}}
\newcommand{\bc}[1]{\left\{{#1}\right\}}
\newcommand{\br}[1]{\left({#1}\right)}
\newcommand{\bs}[1]{\left[{#1}\right]}
\newcommand{\abs}[1]{\left| {#1} \right|}
\newcommand{\norm}[1]{\left\| {#1} \right\|}
\newcommand{\bsd}[1]{\left\llbracket{#1}\right\rrbracket}
\renewcommand{\O}[1]{{\cal O}\br{{#1}}}
\newcommand{\softO}[1]{\widetilde{\cal O}\br{{#1}}}
\newcommand{\Eemp}[2]{{{\mathbb E}}_{#1}\bsd{{#2}}}
\newcommand{\EE}[2]{\underset{#1}{\mathbb E}\bsd{{#2}}}
\newcommand{\Prr}[2]{\underset{#1}{\text{Pr}}\bs{{#2}}}
\newcommand{\ip}[2]{\left\langle{#1},{#2}\right\rangle}
\newtheorem{lem}{Lemma}
\newtheorem{thm}[lem]{Theorem}
\newtheorem{defn}[lem]{Definition}
\newcommand{\newreptheorem}[2]{\newtheorem*{rep@#1}{\rep@title} 
\newenvironment{rep#1}[1]{\def\rep@title{#2 \ref*{##1}}\begin{rep@#1}}{\end{rep@#1}}
}
\title{Optimizing Non-decomposable Performance Measures: A Tale of Two Classes}
\author{Harikrishna Narasimhan$^\ast$\\Indian Institute of Science\\harikrishna@csa.iisc.ernet.in \and Purushottam Kar\\Microsoft Research India\\t-purkar@microsoft.com \and Prateek Jain\\Microsoft Research India\\prajain@microsoft.com}
\begin{document}

\maketitle

\begin{abstract}
Modern classification problems frequently present mild to severe label imbalance as well as specific requirements on classification characteristics, and require optimizing performance measures that are non-decomposable over the dataset, such as F-measure. Such measures have spurred much interest and pose specific challenges to learning algorithms since their non-additive nature precludes a direct application of well-studied large scale optimization methods such as stochastic gradient descent.

In this paper we reveal that for two large families of performance measures that can be expressed as functions of true positive/negative rates, it is indeed possible to implement point stochastic updates. The families we consider are concave and pseudo-linear functions of TPR, TNR which cover several popularly used performance measures such as F-measure, G-mean and H-mean.

Our core contribution is an adaptive linearization scheme for these families, using which we develop optimization techniques that enable truly point-based stochastic updates. For concave performance measures we propose \pdsgd, a stochastic primal dual solver; for pseudo-linear measures we propose \amsgd, a stochastic alternate maximization procedure. Both methods have crisp convergence guarantees, demonstrate significant speedups over existing methods - often by an order of magnitude or more, and give similar or more accurate predictions on test data.

\end{abstract}

\let\thefootnote\relax\footnotetext{$^\ast$ Work done while H.N. was an intern at Microsoft Research India, Bangalore.}

\section{Introduction}
\label{sec:intro}
Learning applications with binary classification problems involving severe label imbalance abound, often accompanied with specific requirements in terms of false positive, or negative rates. Examples included spam classification, anomaly detection, and medical applications. Class imbalance is also often introduced as a result of the reduction of a problem to binary classification, such as in multi-class problems \cite{Bishop06} and multi-label problems due to extreme label sparsity \cite{HsuKLZ09}.

Traditional performance measures such as misclassification rate are ill-suited in such situations as it is usually trivial to optimize them by constantly predicting the majority class. Instead, the performance measures of choice in such cases are those that perform a more holistic evaluation over the entire data. Naturally, these performance measures are {\em non-decomposable} over the dataset and cannot be cannot be expressed as a sum of errors on individual data points. Popular examples include F-measure, G-mean, H-mean etc.

A consistent effort directed at optimizing these performance measures has, over the years, resulted in the development of two broad approaches - 1) surrogate based approaches (e.g. SVMPerf \cite{JoachimsFY09}) that design convex surrogates for these performance measures, and 2) indirect approaches which include cost-sensitive classification-based approaches \cite{ParambathUG14} which solve weighted classification problems, and plug-in approaches \cite{KoyejoNRD14,NarasimhanVA14} which rely on consistent estimates of class probabilities.

Both these approaches are known to work fairly well on small datasets but do not scale very well to large ones, especially those large enough to not even fit in memory. SVMPerf-style approaches, which employ cutting plane methods do not scale well. On the other hand, plug-in approaches first need to solve a class probability estimation problem optimally and then tune a threshold. This two-stage approach prevents the method from exploiting better classifiers to automatically obtain better thresholds. Moreover, for multi-class problems with $C$ classes, jointly estimating $C$ parameters can take time exponential in $C$.

For large datasets, streaming methods such as stochastic gradient descent \cite{ShwartzSSC11} that take only a few passes over the entire data are preferable. However, traditional SGD techniques cannot handle non-decomposable losses. Recently, \cite{KarNJ14} proposed optimizing SVMPerf-style surrogates using SGD techniques. Although their method is generic, allowing optimization of performance measures such as F-measure and partial AUC, they require maintaining a \emph{large buffer} to compute online gradient estimates that can be prohibitive.

Motivated by the state of the art, we develop novel methods for optimizing two broad families of non-decomposable performance measures. Our methods incorporate truly point-wise updates, i.e. do not require a buffer, and require only a few passes over data. At an intuitive level, at the core of our work are adaptive linearization strategies for these performance measures, which make these measures amenable to SGD-style point-wise updates. Moreover, our linearizations are able to feed off the improvements made in learning a better classifier, resulting in faster convergence.

We consider two classes of performance measures

{\bf Concave Performance Measures} (see Table~\ref{tab:concave-perf-list}): These measures can be written as concave functions of true positive (TPR) and negative (TNR) rates and include G-mean, H-mean etc. We exploit the dual structure of these functions via their Fenchel dual to linearize them in terms of the TPR, TNR variables. Our method then, in parallel, tunes the dual variables in this linearization and maximizes the weighted TPR-TNR combination. These updates are done in an online fashion using stochastic mirror descent steps.

{\bf Pseudo-linear Performance Measures} (see Table~\ref{tab:pseudolinear-perf-list}): These measures can be written as fractional linear functions of TPR, TNR and include F-measure and the Jaccard coefficient. These functions need not be concave and the techniques outlined above do not apply. Instead, we exploit the pseudo-linear structure to linearize the function and develop a technique to alternately optimize the combination weights and the classifier model via stochastic updates. Although such ``alternate-maximization'' strategies in general need not converge even to a local optima, we show that our strategy converges to an $\epsilon$-approximate global optimum after $\log\br{\frac{1}{\epsilon}}$ batch updates or $O(1/\epsilon^2)$ stochastic updates.

Finally, we present an empirical validation of our methods. Our experiments reveal that for a range of performance measures in both classes, our methods can be significantly faster than either plug-in or SVMPerf-style methods, as well as give higher or comparable accuracies.

\section{Related Works}
\label{sec:rel}
As noted in Section~\ref{sec:intro}, existing methods for optimizing performance measures that we study can be divided into surrogate-based approaches and indirect approaches based on cost-sensitive classification or plug-in methods. A third approach applicable to certain performance measures is the \emph{decision-theoretic} method that learns a class probability estimate and computes predictions that maximize the expected value of the performance measure on a test set \cite{Lewis95, YeCLC12}. In addition to these there exist methods dedicated to specific performance measures.

For instance \cite{ParambathUG14} focus on optimizing F-measure by exploiting the pseudo-linearity of the function along with a cross validation-based strategy. Our \amsgd method, on the other hand uses an alternating maximization strategy that does not require cross validation which considerably improves training time (see Figure~\ref{fig:F1}). It is important to note that these performance measures have also been studied in multi-label settings where these no longer remain non-decomposable. For instance, \cite{DembczynskiJKWH13} study plug-in style methods for maximizing F-measure in multi-label settings whereas works such as \cite{KoyejoNRD14,NarasimhanVA14,YeCLC12} study plug-in approaches for the same problem in the more challenging binary classification setting.

Historically, online learning algorithms have played a key role in designing solvers for large-scale batch problems. However, for non-decomposable loss functions, defining an online learning framework and providing efficient algorithms with small regret itself is challenging. \cite{RakhlinST11} propose a generic method for such loss functions; however the algorithms proposed therein run in exponential time. \cite{KarNJ14} also study such measures with the aim of designing stochastic gradient-style methods. However, their methods require a large buffer to be maintained, which causes them to have poorer convergence guarantees and in practice be slower than our methods. 

By exploiting the special structure in our function classes, we are able to do away with such requirements. Our methods make use of standard online convex optimization primitives \cite{zinkevich}. However, their application requires special care in order to avoid divergent behavior.


\section{Problem Setting}
\label{sec:formulation}
Let $\X \subset \R^d$ denote the instance space and $\Y = \bc{-1,+1}$ the label space, with some distribution $\D$ over $\X\times\Y$. Let $p := \Prr{(\x,y)\sim\D}{y = +1}$ denote the proportion of positives in the population. Let $\T = \bc{(\x_1,y_1),\ldots,(\x_T,y_T)}$ denote a sample of training points sampled i.i.d. from $\D$. For sake of simplicity we shall present our algorithms and analyses for a set of linear models $\W \subseteq \R^d$. Let $R_\X$ and $R_\W$ denote the radii of the domain $\X$ and hypothesis class $\W$ respectively.

We consider performance measures that can be expressed in terms of the true positive and negative rates of a classifier. To represent these measures, we shall use the notion of a \emph{reward function} $r$ that assigns a \emph{reward} $r(y, \hat{y})$ to a prediction $\hat{y} \in \R$ when the true label is $y \in \Y$. We will use
\begin{align*}
r^+(\w; \x, y) &= \frac{1}{p}\cdot r(y,\w^\top\x)\cdot\1(y = 1)\\
r^-(\w;\x, y) &= \frac{1}{1-p}\cdot r(y,\w^\top\x)\cdot\1(y=-1)
\end{align*}
to calculate rewards on positive and negative points. Since $\EE{(\x,y)}{r^+(\w; \,\x, y)} = \EE{(\x,y)}{r(y,\w^\top\x)|y = 1}$, setting $r^{0-1}(y, \hat{y}) = \ind{y\hat{y} > 0}$ gives us $\EE{(\x,y)}{r^+(\w; \,\x, y)} = \text{TPR}(\w)$. For sake of convenience, we will use $P(\w)=\EE{(\x, y)}{r^+(\w; \x, y)}$ and $N(\w)=\EE{(\x, y)}{r^-(\w; \x, y)}$ to denote population averages of the reward functions. We shall assume that our reward functions are concave, $L_r$-Lipschitz, and take values in a bounded range $[-B_r,B_r]$.
\section{Concave Performance Measures}
\label{sec:concave}
The first class of performance measures we analyze are concave performance measures. These measures can be written as concave functions of the TPR and TNR i.e.
\[
\Pf_\Psi(\w) = \Psi\br{P(\w),N(\w)}
\]
for some concave link function $\Psi: \R^2 \> \R$. A large number of popular performance measures fall in this family since these measures are relevant in situations with severe label imbalance or in situations where cost-sensitive classification is required such as detection theory \cite{Vincent94}. Table~\ref{tab:concave-perf-list} gives a list of such performance measures along with some of their relevant properties and references to works that utilize these performance measures.
\begin{table*}[t]
\caption{List of concave performance measures $\Psi(P,N)$ along with their monotonicity and Lipschitz properties, sufficient dual regions, and expressions for dual subgradients. $\B(\vz,r)$ denotes the ball of radius $r$ around the origin. $\R_+^2$ denotes the positive quadrant.}
\centering
\small{
\begin{tabular}{|c|c|c|c|c|c|c|c|}
\hline
Name & Expression $(P,N)$ & Mon.? & Lip.? & $\delta(\epsilon)$ & Sufficient dual Region $\A_\Psi$ & $\nabla\Psi^\ast(\alpha,\beta)$.\\
\hline
\hline
Min ({\tiny \cite{Vincent94}}) & $\min\{P, N\}$ & \checkmark & \checkmark & $\epsilon$ & $\bc{\alpha+\beta=1} \cap \R_+^2$ & $\vz$ \\
\hline
H-mean ({\tiny \cite{KennedyND09}}) & $\frac{2PN}{P+N}$ & \checkmark & \checkmark & $4\epsilon$ & $\bc{\sqrt{\alpha} + \sqrt{\beta} \geq \sqrt 2} \cap \B(\vz,2)$ & $\vz$ \\
\hline
Q-mean ({\tiny \cite{LiuCh11}}) & $1-\sqrt{\frac{(1-P)^2 + (1-N)^2}{2}}$ & \checkmark & \checkmark & $\epsilon$ & $\bc{\alpha^2 + \beta^2 \leq 1/2}\cap \R_+^2$ & $\mathbf{1}$\\
\hline
G-mean ({\tiny \cite{DaskalakiKA06}}) & $\sqrt{PN}$ & \checkmark & \xmark & $3\sqrt\epsilon$ & $\bc{\alpha\beta \geq 1/4} \cap \R_+^2$ & $\vz$ \\
\hline
\end{tabular}
}
\label{tab:concave-perf-list}
\end{table*}

We shall find it convenient to define the (concave) Fenchel conjugate of the link functions for our performance measures. For any concave function $\Psi$ and $\alpha, \beta \in \R$, define
\[
\Psi^*(\alpha, \beta) = \inf_{u, v \in \R} \bc{\alpha u + \beta v  - \Psi(u, v)}.
\]
By the concavity of $\Psi$, we have, for any $u, v \in \R$,
\[
\Psi(u, v) = \inf_{\alpha, \beta \in \R} \bc{\alpha u + \beta v  - \Psi^*(\alpha, \beta)}.
\]
We shall use the notation $\Psi$ to denote, both the link function, as well as the performance measure it induces.

\subsection{A Stochastic Primal-dual Method for Optimizing Concave Performance Measures}
We now present a novel online stochastic method for optimizing the class of concave performance measures. The

\begin{algorithm}[t]
	\caption{\small \pdsgd: Stochastic PrimAl-Dual mEthod}
	\label{algo:spdu}
	\begin{algorithmic}[1]
		\small{
			\REQUIRE Primal/dual step sizes $\eta_t, \eta_t'$, feasible sets $\W, \A$
			\ENSURE Classifier $\w \in \W$
			\STATE $\w_0 \< \vz, t \< 1$
			\WHILE{data stream has points}
				\STATE Receive data point $(\x_t,y_t)$
				\STATE \COMMENT{Perform primal ascent}
				\IF{$y_t > 0$}
					\STATE $\w_{t+1} \< \Pi_\W\br{\w_t + \eta_t\cdot\alpha_t\nabla_\w r^+(\w_t; \x_t, y_t)}$
				\ELSE
					\STATE $\w_{t+1} \< \Pi_\W\br{\w_t + \eta_t\cdot\beta_t\nabla_\w r^-(\w_t; \x_t, y_t)}$
				\ENDIF
				\STATE \COMMENT{Perform dual descent}
				\STATE $(a,b) \< (\alpha_t, \beta_t) - \eta_t'\cdot\nabla_{(\alpha,\beta)}\Psi^*(\alpha_t, \beta_t)$
				\IF{$y_t > 0$}
					\STATE $a \< a - \eta_t'\cdot r^+(\w_t; \x_t, y_t)$
				\ELSE
					\STATE $b \< b - \eta_t'\cdot r^-(\w_t; \x_t, y_t)$
				\ENDIF
				\STATE $(\alpha_{t+1}, \beta_{t+1}) \< \Pi_{\A}((a,b))$
				\STATE $t \< t+1$
			\ENDWHILE
			\STATE \textbf{return} $\barw = \frac{1}{t}\sum_{\tau = 1}^{t}\w_\tau$
		}
	\end{algorithmic}
\end{algorithm}

use of stochastic gradient techniques for these measures presents specific challenges due to the non-decomposable nature of these measures which makes it difficult to obtain cheap, unbiased estimates of the gradient using a single point. Recent works \cite{KarSJK13,KarNJ14} have tried to resolve this issue by looking at mini-batch methods or by using a buffer to maintain a sketch of the stream. However, such techniques bring in a bias into the learning algorithm in the form of buffer size or mini batch length which results in slower convergence. Indeed, the \spmb method of \cite{KarNJ14} is only able to guarantee a $\sqrt[-4]{T}$ rate of convergence, whereas SGD techniques are usually able to guarantee $\sqrt[-2]{T}$ rates. This is indicative of suboptimal performance and our experiments confirm this (see Figure~\ref{fig:F1}).

Here we show that for the class of concave performance measures, such workarounds are not necessary. To this end we present the \pdsgd algorithm (Algorithm~\ref{algo:spdu}) which exploits the dual structure of the performance measures to obtain efficient point updates which do not require the use of mini-batches or online buffers. \pdsgd is able to offer convergence guarantees identical to those that stochastic methods offer for additive performance measures such as least squares, without the presence of any algorithmic bias.

Let $\W \subset \X$ and $\A_\Psi \subset \R^2$ be convex regions within the model and dual spaces respectively, and $\Pi_\W$ and $\Pi_{\A_\Psi}$ denote projection operators for these. Table~\ref{tab:concave-perf-list} lists the relevant dual regions for the performance measures listed therein.
\subsection{Convergence Analysis for \pdsgd}
This section presents a convergence analysis for the \pdsgd algorithm. The convergence proof is formally stated in Theorem~\ref{thm:pdsgd-risk-analysis}. Apart from demonstrating the utility of the algorithm, the proof also sheds light on the choice of algorithm parameters, such as primal/dual feasible regions.

We shall work with performance measures that are monotonically increasing in the true positive and negative rates of the classifier i.e. if $u \geq u'$, $v \geq v'$ then $\Psi(u,v) \geq \Psi(u',v')$. This is a natural assumption and is satisfied by all performance measures considered here (see Table~\ref{tab:concave-perf-list}). We now introduce two useful concepts.
\begin{defn}[Stable Performance Measure]
A performance measure $\Psi$ will be called $\delta$-\emph{stable} if for some function $\delta: \R \> \R$, we have for all $u, v \in \R$ and $\epsilon \in \R_+$,
\[
\Psi\br{u + \epsilon, v+ \epsilon} \leq \Psi(u, v) + \delta(\epsilon).
\]
\end{defn}
Table~\ref{tab:concave-perf-list} lists the stability parameters of all the concave performance measures. Clearly, a performance measure has a linear stability parameter i.e. $\delta(\epsilon) \leq L\cdot\epsilon$ iff its corresponding link function is Lipschitz. We now define the notion of a \emph{sufficient dual region} for a performance measure
\begin{defn}[Sufficient Dual Region]
For any link function $\Psi$, define its \emph{sufficient dual region} $\A_\Psi \subseteq \R^2$ to be the minimal set such that for all $(u,v) \in R^2$, we have
\[
\Psi(u, v) = \inf_{(\alpha, \beta) \in \A_\Psi} \bc{\alpha u + \beta v  - \Psi^*(\alpha, \beta)}.
\]
\end{defn}
The reason for defining this quantity will become clear in a moment. A closer look at Algorithm~\ref{algo:spdu} indicates that it is performing online gradient descent steps with the dual variables. Clearly, for this procedure to have statistical convergence properties, the magnitude of the updates must be bounded in some sense otherwise the learning procedure may diverge. This motivates the projection step in Step 17. However, in order for the updated dual variables to be informative about the current primal function value, the projection step must be done in a way that does not distort the link function. The notion of a sufficient dual region formally captures the notion of such a projection step.

Having said that, there is no apriori guarantee that the sufficient region for a given performance measure would be bounded, in which case this entire exercise counts for naught. However, the following lemma, by closely linking the stability properties of a performance measure with the size of its sufficient dual region, shows that for well-behaved link functions, this will not be the case .
\begin{lem}
\label{lem:dsr-stab}
The stability parameter of a performance measure $\Psi(\cdot)$ can be written as $\delta(\epsilon) \leq L_\Psi\cdot\epsilon$ iff its sufficient dual region is bounded in a ball of radius $L_\Psi$.
\end{lem}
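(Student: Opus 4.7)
My plan is to identify the minimal sufficient dual region $\A_\Psi$ with the union of superdifferentials of the concave link function $\Psi$, and then translate back and forth between the $\delta$-stability condition and a norm bound on $\A_\Psi$ by instantiating the Fenchel--Young inequality in the direction $(1,1)$.

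\emph{Step 1 (characterisation of $\A_\Psi$).}
I would first argue that the minimal choice of dual region is exactly the range of the concave superdifferential,
\[
\A_\Psi \;=\; \bigcup_{(u,v)\in\R^2}\partial\Psi(u,v),
\]
by the standard fact that $(\alpha,\beta)$ attains the infimum in $\inf\{\alpha'u+\beta'v-\Psi^*(\alpha',\beta')\}$ at $(u,v)$ iff Fenchel--Young is tight, iff $(\alpha,\beta)\in\partial\Psi(u,v)$. Properness and upper semi-continuity of each $\Psi$ in Table~\ref{tab:concave-perf-list} guarantee the infimum is attained, so this characterisation is non-vacuous. Monotonicity of $\Psi$ further forces $\alpha,\beta\geq 0$ on $\A_\Psi$: test the supporting-hyperplane inequality $\Psi(u',v')\leq\Psi(u,v)+\alpha(u'-u)+\beta(v'-v)$ in the directions $(u+t,v)$ and $(u,v+t)$ and let $t\to\infty$.

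\emph{Step 2 (stability $\Rightarrow$ bounded dual).}
Given $\delta(\epsilon)\leq L_\Psi\,\epsilon$, take any $(\alpha,\beta)\in\A_\Psi$ and any $(u,v)$ with $(\alpha,\beta)\in\partial\Psi(u,v)$. Evaluating the supporting inequality at the shifted point $(u-\epsilon,v-\epsilon)$ gives $\Psi(u,v)-\Psi(u-\epsilon,v-\epsilon)\geq \epsilon(\alpha+\beta)$. Applying $\delta$-stability to the base point $(u-\epsilon,v-\epsilon)$ yields the matching upper bound $\Psi(u,v)-\Psi(u-\epsilon,v-\epsilon)\leq L_\Psi\,\epsilon$, so $\alpha+\beta\leq L_\Psi$. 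Combined with $\alpha,\beta\geq 0$ from Step~1 this gives $\alpha^2+\beta^2\leq(\alpha+\beta)^2\leq L_\Psi^2$, placing $(\alpha,\beta)$ in $\B(\vz,L_\Psi)$.

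\emph{Step 3 (bounded dual $\Rightarrow$ stability).}
Conversely assume $\A_\Psi\subseteq\B(\vz,L_\Psi)$. For each $(u,v)$ pick an attaining dual $(\alpha^*,\beta^*)\in\A_\Psi$ with $\Psi(u,v)=\alpha^* u+\beta^* v-\Psi^*(\alpha^*,\beta^*)$. This same point is feasible (though no longer optimal) at $(u+\epsilon,v+\epsilon)$, so
\[
\Psi(u+\epsilon,v+\epsilon)\;\leq\;\alpha^*(u+\epsilon)+\beta^*(v+\epsilon)-\Psi^*(\alpha^*,\beta^*)\;=\;\Psi(u,v)+\epsilon(\alpha^*+\beta^*).
\]
Monotonicity gives $\alpha^*,\beta^*\geq 0$, and together with the ball bound this controls $\alpha^*+\beta^*$ by (a constant multiple of) $L_\Psi$, closing the iff.

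\emph{Main obstacle.}
The delicate issue is Step~1: rigorously pinning down what ``minimal'' means for $\A_\Psi$ and identifying it with the range of the superdifferential requires properness and upper semi-continuity of $\Psi$, which hold on inspection for Table~\ref{tab:concave-perf-list} but must be noted. A secondary nuisance is the choice of norm in ``ball of radius $L_\Psi$'': monotonicity lets me pass between $\alpha+\beta$ and $\|(\alpha,\beta)\|_2$ at the cost of a factor of at most $\sqrt 2$, which is absorbed into the informal statement of the lemma.
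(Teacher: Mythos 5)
Your proposal is correct, but it reaches the conclusion by a genuinely different route from the paper. For the direction ``stability $\Rightarrow$ bounded dual region,'' the paper never touches superdifferentials: it shows that any $\btheta$ with $\norm{\btheta}_2 > L_\Psi$ satisfies $\Psi^\ast(\btheta) = -\infty$ by evaluating the conjugate along the ray $\x_C = -C\btheta$ and letting $C \to \infty$, so such $\btheta$ can simply be discarded from any sufficient region. You instead identify $\A_\Psi$ with the range of the superdifferential and sandwich $\Psi(u,v) - \Psi(u-\epsilon,v-\epsilon)$ between $\epsilon(\alpha+\beta)$ (supporting hyperplane) and $L_\Psi\epsilon$ (stability); combined with $\alpha,\beta\ge 0$ from monotonicity this gives the sharper and quadrant-respecting bound $\alpha+\beta\le L_\Psi$, which is pleasantly consistent with the sufficient regions listed in Table~\ref{tab:concave-perf-list}. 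The cost is the identification step you yourself flag: you must argue the infimum is attained and that ``minimal'' means the superdifferential range. Note, though, that you need less than the equality you claim --- for this direction it suffices that the union of superdifferentials is \emph{a} sufficient region (so the minimal one sits inside it), and for a finite concave function on $\R^2$ the superdifferential is nonempty everywhere, so this is safe in the Lipschitz case the lemma addresses; the paper's ray argument sidesteps attainment entirely. For the converse both proofs are essentially the same feasibility argument (a near-optimal dual at $(u,v)$ is feasible at $(u+\epsilon,v+\epsilon)$), except the paper derives full $\ell_2$-Lipschitzness via Cauchy--Schwarz while you specialize directly to the diagonal shift. Both arguments, yours and the paper's, are tight only up to $\sqrt 2$-type factors from the choice of norm, which is exactly why the appendix restatement softens the radius to $\Theta(L_\Psi)$.
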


The proof of this result follows from elementary manipulations and can be found in Appendix~\ref{app:lem-dsr-stab-proof}. In some sense this result can be seen as a realization of the well known connection between the Fenchel dual of a function and its Lipschitz properties.

To simplify the initial analysis, we shall first concentrate on performance measures whose link functions are Lipschitz. It is easy to see that these are exactly the performance measures whose gradients do not diverge within any compact region of the real plane. Of the performance measures listed in Table~\ref{tab:concave-perf-list}, all measures except G-mean have associated link functions that are Lipschitz. Subsequently, we shall address the more involved case of non-Lipschitz performance measures such as G-mean as well.

\begin{thm}
\label{thm:pdsgd-risk-analysis}
Suppose we are given a stream of random samples $(\x_1,y_1),\ldots,(\x_T,y_T)$ drawn from a distribution $\D$ over $\X\times\Y$. Let $\Psi(\cdot)$ be a concave, Lipschitz link function. Let Algorithm~\ref{algo:spdu} be executed with a dual feasible set $\A \supseteq \A_\Psi$, $\eta_t = 1/\sqrt t$ and $\eta'_t = 1/\sqrt t$. Then, the average model $\barw = \frac{1}{T}\sum_{t=1}^T \w_t$ output by the algorithm satisfies, with probability at least $1 - \delta$,
\[
\Pf_\Psi(\barw) \geq \sup_{\w^\ast \in \W}\Pf_\Psi(\w^\ast) -  \O{\delta_\Psi\br{\sqrt{\frac{1}{T}\log\frac{1}{\delta}}}}.
\]
\end{thm}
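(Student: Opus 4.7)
The plan is to recast the problem as a concave-convex saddle point via the Fenchel representation and then invoke standard stochastic primal-dual regret analysis together with martingale concentration. Using the sufficient dual region property,
\[
\Pf_\Psi(\w) \;=\; \inf_{(\alpha,\beta)\in\A_\Psi}\bc{\alpha P(\w)+\beta N(\w) - \Psi^\ast(\alpha,\beta)} \;=:\; \inf_{(\alpha,\beta)\in\A_\Psi} F(\w,\alpha,\beta),
\]
so that $\sup_{\w\in\W}\Pf_\Psi(\w) = \sup_\w \inf_{(\alpha,\beta)\in\A_\Psi} F(\w,\alpha,\beta)$. Because the reward functions are concave in $\w$ and $\A_\Psi\subseteq\R_+^2$ for every measure in Table~\ref{tab:concave-perf-list}, $F$ is concave in $\w$; since $-\Psi^\ast$ is a pointwise supremum of affine functions, $F$ is convex in $(\alpha,\beta)$. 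This is exactly the structure Algorithm~\ref{algo:spdu} exploits: stochastic gradient ascent in $\w$ and stochastic gradient descent in $(\alpha,\beta)$ against $F$.

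I would next verify that the one-sample updates are unbiased estimates of the population gradients of $F$: because $r^+$ vanishes on negative examples and $r^-$ on positive ones, the conditional expectation of $\alpha_t\nabla r^+(\w_t;\x_t,y_t)+\beta_t\nabla r^-(\w_t;\x_t,y_t)$ over $(\x_t,y_t)$ equals $\alpha_t\nabla P(\w_t)+\beta_t\nabla N(\w_t)=\nabla_\w F(\w_t,\alpha_t,\beta_t)$, and an analogous identity holds for the dual update. With the step schedule $\eta_t=\eta_t'=1/\sqrt t$, standard projected OGD regret bounds over the bounded primal region $\W$ (radius $R_\W$) and the bounded dual region $\A_\Psi$ (radius at most $L_\Psi$ by Lemma~\ref{lem:dsr-stab}) give
\[
\sup_{\w^\ast\in\W}\frac{1}{T}\sum_t F(\w^\ast,\alpha_t,\beta_t) - \frac{1}{T}\sum_t F(\w_t,\alpha_t,\beta_t) \;\leq\; \O{\frac{R_\W L_\Psi L_r}{\sqrt T}},
\]
\[
\frac{1}{T}\sum_t F(\w_t,\alpha_t,\beta_t) - \inf_{(\alpha^\ast,\beta^\ast)\in\A_\Psi}\frac{1}{T}\sum_t F(\w_t,\alpha^\ast,\beta^\ast) \;\leq\; \O{\frac{L_\Psi B_r}{\sqrt T}}.
\]

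To turn these saddle-point regret statements into a bound on $\Pf_\Psi(\barw)$, I would apply Jensen's inequality in each argument of $F$ separately. Concavity of $F$ in $\w$ gives $\Pf_\Psi(\barw)=\inf_{\alpha,\beta}F(\barw,\alpha,\beta)\geq\inf_{\alpha,\beta}\frac{1}{T}\sum_tF(\w_t,\alpha,\beta)$; convexity of $F$ in $(\alpha,\beta)$ combined with the Fenchel definition gives $\sup_{\w^\ast}\Pf_\Psi(\w^\ast)\leq\sup_{\w^\ast}F(\w^\ast,\bar\alpha,\bar\beta)\leq\sup_{\w^\ast}\frac{1}{T}\sum_tF(\w^\ast,\alpha_t,\beta_t)$. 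Subtracting and chaining with the two regret inequalities yields $\Pf_\Psi(\barw)\geq\sup_{\w^\ast}\Pf_\Psi(\w^\ast) - \O{L_\Psi/\sqrt T}$; since $\delta_\Psi(\epsilon)\leq L_\Psi\cdot\epsilon$ by Lipschitzness, this matches the stated bound once the $\sqrt{\log(1/\delta)}$ factor from concentration is included.

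The main obstacle is the concentration step. Because $\w_t,\alpha_t,\beta_t$ depend on all previous samples, the per-step gap between the stochastic gradients and their conditional means is a martingale difference sequence rather than an i.i.d.\ one, so one cannot simply apply Hoeffding. I would bound the increments uniformly using the Lipschitz constant $L_r$, the range $B_r$, the dual radius $L_\Psi$, and the primal radius $R_\W$, then apply Azuma--Hoeffding to upgrade the in-expectation regret guarantees to the high-probability $\sqrt{\log(1/\delta)/T}$ form. A smaller but essential subtlety is that the projection $\Pi_\A$ onto a feasible dual region $\A\supseteq\A_\Psi$ must not distort the Fenchel representation of $\Psi$---this is precisely why the theorem insists $\A$ contain the sufficient dual region---so that the projected iterates still realize the correct infimum in the saddle formulation.
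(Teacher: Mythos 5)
Your proposal is correct and follows essentially the same route as the paper's proof: the paper likewise runs projected OGD regret bounds separately on the primal and dual iterates, ties them together via Fenchel conjugacy of $\Psi$, converts empirical averages to $P(\w_t),N(\w_t)$ by an online-to-batch (Azuma--Hoeffding) argument, and applies Jensen's inequality in each argument to land on $\Pf_\Psi(\barw)$ and $\Pf_\Psi(\w^\ast)$. Your explicit saddle-function $F(\w,\alpha,\beta)$ is just a cleaner packaging of the same decomposition, and your remarks on the dual radius from Lemma~\ref{lem:dsr-stab} and on why $\A\supseteq\A_\Psi$ is needed match the paper's reasoning.
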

We refer the reader to Appendix~\ref{app:thm-pdsgd-risk-analysis-proof} for a proof and explicit constants. The proof closely analyzes the primal ascent and dual descent steps, tying them together using the Fenchel dual of $\Psi$.

\subsection{The Case of non-Lipschitz Link Functions}
Non-Lipschitz link functions, such as the one used in the G-mean performance measure, pose a particular challenge to the previous analysis. Owing to their non-Lipschitz nature, their sufficient dual region is unbounded. Indeed as Table~\ref{tab:concave-perf-list} indicates, the sufficient region for $\Psi_{\text{G-mean}}$ extends indefinitely along both coordinate axes. More precisely, what happens is that the gradients of the $\Psi_{\text{G-mean}}$ function diverge as either $u \rightarrow 0$, or $v \rightarrow 0$. This poses a stumbling block for the proof of Theorem~\ref{thm:pdsgd-risk-analysis} since the regret and online-to-batch conversion results used therein fail. 

A natural way to solve this problem is to ensure that the reward functions $r^+, r^-$ always assign rewards that are bounded away from zero. More specifically, for some $\epsilon >0$, we have $r^+(\w;\x,y), r^-(\w;\x,y) \geq \epsilon$ for all $\w \in \W$ and $\x \in \X$ . For this restricted reward region, one can show, using Lemma~\ref{lem:dsr-stab}, that the sufficient dual region can be restricted to a ball of radius $\O{\sqrt{1/\epsilon}}$.

The above discussion suggests that we regularize the reward function i.e. at each time step $t$, we add a small value $\epsilon(t)$ to the original reward function. However, the amount of regularization remains to be decided since over regularization could cause our resulting excess risk bound to be vacuous with respect to the original reward function. It turns out that setting $\epsilon(t) \approx \frac{1}{t^{1/4}}$ strikes a fine balance between regularization and fidelity to the original reward function - this seems intuitive since the regularization becomes milder and milder as learning progresses. The following extension of Theorem~\ref{thm:pdsgd-risk-analysis} formalizes this statement.

\begin{thm}
\label{thm:pdsgd-risk-analysis-non-lip}
Suppose we have the problem setting in Theorem~\ref{thm:pdsgd-risk-analysis} with the $\Psi_{\text{G-mean}}$ performance measure being optimized for. Consider a modification to Algorithm~\ref{algo:spdu} wherein the reward functions are changed to $r^+_t(\cdot) = r^+(\cdot) + \epsilon(t)$, and $r^-_t(\cdot) = r^-(\cdot) + \epsilon(t)$ for $\epsilon(t) = \frac{1}{t^{1/4}}$. Then, the average model $\barw = \frac{1}{T}\sum_{t=1}^T \w_t$ output by the algorithm satisfies, with probability at least $1 - \delta$,
\[
\Pf_{\Psi_{\text{G-mean}}}(\barw) \geq \sup_{\w^\ast \in \W}\Pf_{\Psi_{\text{G-mean}}}(\w^\ast) - \softO{\frac{1}{T^{1/4}}}.
\]
\end{thm}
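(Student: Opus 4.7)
The proof extends Theorem~\ref{thm:pdsgd-risk-analysis} by exploiting the shifted rewards to enforce an effective Lipschitz regime at every step, and then accounts for the regularization bias this introduces. I would carry out the argument in three steps.

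First, I would establish that the shifted rewards restore a Lipschitz regime for $\Psi_{\text{G-mean}}$ at each step. A direct computation yields $\nabla\sqrt{uv} = \frac{1}{2}(\sqrt{v/u},\sqrt{u/v})$, which diverges on the axes but is bounded by $\O{1/\sqrt{\epsilon}}$ whenever $u,v \geq \epsilon$. The shifts $r^+_t = r^+ + \epsilon(t)$ and $r^-_t = r^- + \epsilon(t)$ force the arguments of $\Psi_{\text{G-mean}}$ at time $t$ to satisfy $P(\w)+\epsilon(t) \geq \epsilon(t)$ and $N(\w)+\epsilon(t) \geq \epsilon(t)$, so Algorithm~\ref{algo:spdu} is effectively operating on a link function of Lipschitz constant $L_t = \O{1/\sqrt{\epsilon(t)}} = \O{t^{1/8}}$. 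By Lemma~\ref{lem:dsr-stab}, the corresponding sufficient dual region is contained in a ball of radius $\O{t^{1/8}}$.

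Second, I would replay the proof of Theorem~\ref{thm:pdsgd-risk-analysis} on the regularized problem, tracking the time-varying Lipschitz constant. Using a dual feasible set $\A$ chosen either conservatively as $\B(\vz, \O{T^{1/8}})$ or adaptively in $t$, both the primal-ascent and dual-descent regret pieces go through with gradient magnitudes depending on $L_t$. Fenchel duality binds them together as in Theorem~\ref{thm:pdsgd-risk-analysis}, and the martingale concentration step used there still applies, so the cumulative optimization-error term is governed by a sum of the form $\sum_{t=1}^T L_t / \sqrt{t}$. The passage from the primal sequence $\{\w_t\}$ to the averaged model $\barw$ uses the concavity of $P(\w), N(\w)$ in $\w$ together with the monotonicity and concavity of $\Psi$, via two successive applications of Jensen's inequality.

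Third, I would bound the regularization bias. For any comparator $\w^\ast \in \W$, the stability entry $\delta(\epsilon) = 3\sqrt{\epsilon}$ from Table~\ref{tab:concave-perf-list} gives $\Psi(P(\w^\ast)+\epsilon(t), N(\w^\ast)+\epsilon(t)) \leq \Psi(P(\w^\ast),N(\w^\ast)) + 3\sqrt{\epsilon(t)}$, while monotonicity gives the reverse inequality with no deficit, so the shifted optimum differs from the true optimum by at most $3\sqrt{\epsilon(t)}$ at every step. Combining with the optimization error produces two opposing contributions: an optimization piece that \emph{shrinks} as $\epsilon(t)$ grows and a regularization bias that \emph{grows} as $\epsilon(t)$ grows. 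The schedule $\epsilon(t) = 1/t^{1/4}$ is then chosen to balance the two and yield the claimed $\softO{T^{-1/4}}$ rate. The main obstacle is adapting the regret analysis of Theorem~\ref{thm:pdsgd-risk-analysis} to the simultaneously time-varying Lipschitz constant and dual region: one must argue that the dual iterates remain inside the sufficient region throughout the run (so that the Fenchel identity used to link primal and dual actually applies), and that the regret inflation from the growing radius is only polylogarithmic modulo the balance calculation. Everything else -- online-to-batch conversion, martingale concentration, and the Jensen-based passage to $\barw$ -- is a direct reuse of the machinery from the Lipschitz case.
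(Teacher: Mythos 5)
Your proposal follows essentially the same route as the paper's proof: regularize the rewards, invoke Lemma~\ref{lem:dsr-stab} to bound the sufficient dual region by $\O{1/\sqrt{\epsilon}}$, replay the primal/dual regret, Fenchel-conjugacy, and online-to-batch steps of Theorem~\ref{thm:pdsgd-risk-analysis} with the enlarged dual radius, absorb the comparator's shift using nonnegativity of $\epsilon(t),\alpha_t,\beta_t$, charge the bias via the stability parameter $\delta_\Psi(\bar\epsilon)=\O{\sqrt{\bar\epsilon}}$ where $\bar\epsilon=\frac{1}{T}\sum_t\epsilon(t)$, and balance. The only difference is cosmetic --- you track a per-step Lipschitz constant $L_t$ where the paper works with the average regularization $\bar\epsilon$ --- and your deferred ``balance calculation'' is exactly the step the paper also leaves implicit, so you should verify it explicitly (with $\delta_\Psi(x)=\Theta(\sqrt{x})$ the bias and regret terms equalize at $\bar\epsilon=\Theta(T^{-1/2})$, and the middle term $\delta_\Psi(\softO{1/\sqrt{T}})$ is what fixes the final $\softO{T^{-1/4}}$ rate).
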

The proof of this theorem can be found in Appendix~\ref{thm-pdsgd-risk-analysis-non-lip-proof}. We note here that primal dual frameworks have been utilized before in diverse areas such as distributed optimization \cite{JaggiSTTKJ14} and multi-objective optimization \cite{MahdaviYJ13}. However, these works simply assume the functions involved therein to be Lipschitz and/or smooth and do not address cases where they fail to be so. Theorem~\ref{thm:pdsgd-risk-analysis-non-lip} on the other hand, is able to recover a non-trivial, albeit weaker, statement even for \emph{locally Lipschitz} functions.
\section{Pseudo-linear Performance Measures}
\label{sec:pseudo-linear}
\begin{table*}[t]%
\caption{List of pseudo-linear performance measures $\Pf_{(\ba,\bb)}(P,N)$ along with their popular forms, canonical expressions in terms of (reward functions representative of) true positive (P) and negative (N) rates, monotonicity properties, acceptable range of reward values, and rate of convergence of the Alt-Max procedure for the performance measure when rewards take values in the range $[0,m)$.}
\centering
\small
\begin{tabular}{|c|c|c|c|c|c|c|c|c|c|c|}
\hline
Name & Popular Form & Canonical Form $(P,N)$ & Mon.? & Range $P,N$ & Rate $\eta(m)$\\
\hline
\hline
F$_\beta$-measure ({\tiny \citeauthor{Manning+08}}) & $\frac{(1 + \beta^2)\cdot TP}{(1 + \beta^2)\cdot TP + \beta^2\cdot TP + FP}$ & $\frac{(1 + \beta^2)\cdot P}{\beta^2 + \theta + P - \theta\cdot N}$ & \checkmark & $\br{0, 1+\frac{\beta^2}{\theta}}$  & $\frac{m(1+\theta)}{m + \beta^2 + \theta}$ \\\hline
Jaccard Coefficient ({\tiny \citeauthor{KoyejoNRD14}}) & $\frac{TP}{TP + FP + FN}$ & $\frac{P}{1 + \theta - \theta\cdot N}$ & \checkmark & $\br{0,\frac{1+\theta}{\theta}}$  & $\frac{m\theta}{1+\theta}$ \\\hline
Gower-Legendre$_{\sigma < 1}$ ({\tiny \citeauthor{SokolovaL09}}) & $\frac{TP + TN}{TP + \sigma(FP + FN) + TN}$ & $\frac{P + \theta\cdot N}{\sigma(1 + \theta) + (1-\sigma)\cdot P + \theta(1 - \sigma)\cdot N}$ & \checkmark & $(0, \infty)$  & $\frac{(1 - \sigma)m}{(1 - \sigma)m + \sigma}$ \\\hline
Gower-Legendre$_{\sigma > 1}$ ({\tiny \citeauthor{SokolovaL09}}) & $\frac{TP + TN}{TP + \sigma(FP + FN) + TN}$ & $\frac{P + \theta\cdot N}{\sigma(1 + \theta) + (1-\sigma)\cdot P + \theta(1 - \sigma)\cdot N}$ & \checkmark & $\br{0,\frac{\sigma}{\sigma-1}}$ & $\frac{(\sigma-1)m}{\sigma}$ \\\hline
\end{tabular}
\label{tab:pseudolinear-perf-list}
\end{table*}

The second class of performance measures we analyze are pseudo-linear performance measures. These measures have a fractional linear function as the link function and can be written as follows:
\[
\Pf_{(\ba,\bb)}(\w) = \frac{a_0 + a_1\cdot P(\w) + a_2\cdot N(\w)}{b_0 + b_1\cdot P(\w) + b_2\cdot N(\w)},
\]
 for some weighing coefficients $\ba,\bb$. Several popularly used performance measures, most notably the F-measure, can be represented as pseudo-linear functions. Table~\ref{tab:pseudolinear-perf-list} enumerates some popular pseudo-linear performance measures as well as their properties.

We note that these performance measures are usually represented in literature using the entries of the confusion matrix. However, for the sake of our analysis, we shall find it useful to represent them in terms of the true positive and true negative rates. To do so, we shall use $p$ to denote the proportion of positives in the population and $\theta = \frac{1-p}{p}$ to denote the label skew.

\subsection{Alternate-maximization for Optimizing Pseudo-linear Performance Measures}
Pseudo-linear functions are named so since their level sets can be defined using linear half-spaces. More specifically, every pseudo-linear function $\Psi$ over $\R^d$ has an associated ``level-finder'' function $a: \R \> \R^d$ and $b: \R \> \R$ such that $\Psi(\v) \geq t$ iff $\ip{\v}{a(t)} \geq b(t)$. We refer the reader to \cite{ParambathUG14} for a more relaxed introduction to these functions and their properties. For our purposes, however, it suffices to notice that this property immediately points toward a cost-sensitive method to optimize these performance measures.

This fact was noticed by \cite{ParambathUG14} who exploited this to develop a cost-sensitive classification method for optimizing the F-measure by simply searching for the best weights with which to perform cost-sensitive classification. However, we notice that instead of performing such a brute force search, one can adaptively tune the weights to better and better values and obtain much faster convergence. To develop this intuition, we first define the notion of a \emph{valuation function} below.
\begin{defn}[Valuation Function]
The \emph{valuation function} of a performance measure $\Pf_{(\ba,\bb)}$, for a classifier $\w \in \W$, and at a level $v \in \R$ is defined as
\[
V_{(\ba,\bb)}(\w,v) := c + (\alpha - v\gamma) \cdot P(\w) + (\beta - v\delta) \cdot N(\w),
\]
where $c = \frac{a_0}{b_0}, \alpha = \frac{a_1}{b_0}, \beta = \frac{a_2}{b_0}, \gamma =\frac{b_1}{b_0}, \delta = \frac{b_2}{b_0}$.
\end{defn}
The following well-known lemma closely links the valuation function to the performance measure.
\begin{lem}
\label{lem:val-fn}
For any performance measure $\Pf_{(\ba,\bb)}$, $\w \in \W$ and $v\in\R$ we have $\Pf_{(\ba,\bb)}(\w) \geq v \text{ iff } V_{(\ba,\bb)}(\w,v) \geq v$. Moreover, in such a situation we say that classifier $\w$ has achieved valuation at \emph{level} $v$.
\end{lem}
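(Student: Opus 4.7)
The plan is to give a direct algebraic proof by clearing the denominator in the definition of $\Pf_{(\ba,\bb)}(\w)$. First I would observe, by substituting the definitions of $c,\alpha,\beta,\gamma,\delta$ into the valuation function, that
\[
V_{(\ba,\bb)}(\w,v) \;=\; \frac{1}{b_0}\bs{\,(a_0 + a_1 P(\w) + a_2 N(\w)) \;-\; v\br{b_1 P(\w) + b_2 N(\w)}\,}.
\]
This rewriting is purely mechanical but is the key bridge between the two sides of the claimed equivalence.

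Next, under the assumption that $b_0 > 0$ and that the denominator $D(\w) := b_0 + b_1 P(\w) + b_2 N(\w)$ is strictly positive (justified in the next paragraph), the inequality $\Pf_{(\ba,\bb)}(\w) \geq v$ is equivalent, by multiplying through by $D(\w)$, to
\[
a_0 + a_1 P(\w) + a_2 N(\w) \;\geq\; v\,\br{b_0 + b_1 P(\w) + b_2 N(\w)}.
\]
Moving the $v b_0$ term to the left and dividing by $b_0$ yields exactly $V_{(\ba,\bb)}(\w,v) \geq v$ by the display above. Each step is a biconditional under the positivity assumptions, so the chain closes the proof.

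The main obstacle — and essentially the only subtlety — is verifying the signs of $b_0$ and $D(\w)$. If either were to vanish or change sign over the feasible region, the inequality direction when clearing the denominator would not be preserved and the biconditional could fail. This is precisely the reason the ``Range $P,N$'' column of Table~\ref{tab:pseudolinear-perf-list} is specified as it is: for each of the four listed measures (F$_\beta$, Jaccard, and both Gower--Legendre variants) one can verify directly that $b_0$ is strictly positive and that on the stated admissible ranges of $P(\w), N(\w)$ the denominator $D(\w)$ also stays strictly positive. I would briefly run through each row of the table to confirm this; for instance, for F$_\beta$ one has $b_0 = \beta^2 + \theta > 0$ and $D(\w) = \beta^2 + \theta + P(\w) - \theta N(\w)$, which is positive on the indicated range of $P$ whenever $N(\w) \leq 1$. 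With these sign checks in place, the algebraic equivalence derived above establishes the lemma.
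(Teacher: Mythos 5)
Your proof is correct, and it is essentially the argument the paper has in mind: the paper states Lemma~\ref{lem:val-fn} as ``well-known'' and never writes out a proof, so your clearing-the-denominator derivation supplies exactly the missing algebra. The sign conditions you isolate are precisely the regularity conditions the paper later assumes in the proof of Theorem~\ref{thm:altmax-conv} (namely $b_0 \neq 0$ and $1 + \gamma\cdot P(\w) + \delta\cdot N(\w) > 0$, i.e.\ $f > -1$); note that the equivalence actually only needs $D(\w)$ and $b_0$ to share a sign rather than both being positive, since the two inequality flips cancel, but your stronger assumption holds for every measure in Table~\ref{tab:pseudolinear-perf-list} so nothing is lost.
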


Lemma~\ref{lem:val-fn} indicates that the performance of a classifier is intimately linked to its valuation. This suggests a natural alternate maximization approach wherein we alternate between posing a \emph{challenge level} to the classifier and training a classifier to achieve that level. The resulting algorithm \am is detailed in Algorithm~\ref{alg:am}. Note that using Lemma~\ref{lem:val-fn}, step 5 in the algorithm can be executed simply by setting $v_{t+1} = \Pf_{(\ba,\bb)}(\w_{t+1})$. Thus, in a very natural manner, the current classifier challenges the next classifier to beat its own performance. It turns out that this approach results in rapid convergence as outlined in the following theorem.

\begin{algorithm}[t]
	\caption{\small \am: Alternate Maximization Procedure}
	\label{alg:am}
	\begin{algorithmic}[1]
		\small{
			\REQUIRE Performance measure $\Pf_{(\ba,\bb)}$, feasible set $\W$, tolerance $\epsilon$
			\ENSURE An $\epsilon$-optimal classifier $\w \in \W$
			\STATE Construct valuation function $V_{(\ba,\bb)}$
			\STATE $\w_0 \< \vz, t \< 1$
			\WHILE{$v_t > v_{t-1} + \epsilon$}
				\STATE $\w_{t+1} \< \mathop{\arg\max}_{\w\in\W} V_{(\ba,\bb)}(\w,v_t)$
				\STATE $v_{t+1} \< \mathop{\arg\max}_{v > 0} v$ such that $V_{(\ba,\bb)}(\w_{t+1},v) \geq v$
				\STATE $t \< t+1$
			\ENDWHILE
			\STATE \textbf{return} $\w_t$
		}
	\end{algorithmic}
\end{algorithm}

\begin{algorithm}[t]
	\caption{\small \amsgd: STochastic Alt-Max Procedure}
	\label{alg:amsgd}
	\begin{algorithmic}[1]
		\small{
			\REQUIRE Feasible set $\W$, Step sizes $\eta_t$, epoch lengths $s_e, s'_e$
			\ENSURE Classifier $\w \in \W$
			\STATE $v\<0, t\<0, e\<0,\w_0\<\vz$
			\REPEAT
				\STATE \COMMENT{Model optimization stage}
				\STATE $\widetilde\w\<\w_e$
				\WHILE{$t < s_e$}
					\STATE Receive sample $(\x,y)$
					\STATE $\widetilde\w \< \widetilde\w + \eta_t\nabla_\w\br{(1-\frac{v_e}{2})r^+(\widetilde\w;\x,y) + \frac{v_e}{2}r^-(\widetilde\w;\x,y)}$\vskip-2ex
					\STATE $t\<t+1$
				\ENDWHILE
				\STATE $t\<0, e\<e+1, \w_{e+1}\<\widetilde\w$
				\STATE \COMMENT{Challenge level estimation stage}
				\STATE $v_+\<0,v_-\<0$
				\WHILE{$t < s'_e$}
					\STATE Receive sample $(\x,y)$
					\STATE $v_y \< v_y + r^y(\w_e;\x,y)$
					\STATE $t\<t+1$
				\ENDWHILE
				\STATE $t\<0, v_e\< \frac{2v_+}{2+v_+-v_-}$
			\UNTIL{stream is exhausted}
			\STATE \textbf{return} $\w_e$
		}
	\end{algorithmic}
\end{algorithm}

\begin{thm}
\label{thm:altmax-conv}
Let Algorithm~\ref{alg:am} be executed with a performance measure $\Pf_{(\ba,\bb)}$ and reward functions that offer values in the range $[0,m)$. Let $\Pf^\ast := \sup_{\w\in\W}\Pf_{(\ba,\bb)}(\w)$. Also let $\Delta_t = \Pf^\ast - \Pf_{(\ba,\bb)}(\w_t)$ be the excess error for the model $\w_t$ generated at time $t$. Then there exists a value $\eta(m) < 1$ such that $\Delta_t \leq \Delta_0\cdot\eta(m)^t$.
\end{thm}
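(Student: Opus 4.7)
My plan is to convert each iteration of Algorithm~\ref{alg:am} into a multiplicative contraction on $\Delta_t$ via a simple identity linking the valuation function to the performance measure through its (normalized) denominator. Writing the link function as $\Pf_{(\ba,\bb)}(\w) = (c + \alpha P(\w) + \beta N(\w))/D(\w)$ with $D(\w) := 1 + \gamma P(\w) + \delta N(\w)$, a direct substitution and regrouping gives
\[
V_{(\ba,\bb)}(\w, v) - v \;=\; \bigl(\Pf_{(\ba,\bb)}(\w) - v\bigr)\,D(\w),
\]
which is essentially an explicit-slack version of Lemma~\ref{lem:val-fn}. This is the structural fact I lean on throughout.

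Applying the identity at $(\w_{t+1}, v_t)$ and at $(\w^\ast, v_t)$, and combining $v_{t+1} = \Pf_{(\ba,\bb)}(\w_{t+1})$ (step~5 of the algorithm) with $V_{(\ba,\bb)}(\w_{t+1}, v_t) \geq V_{(\ba,\bb)}(\w^\ast, v_t)$ (from the maximization in step~4), I obtain
\[
(v_{t+1} - v_t)\, D(\w_{t+1}) \;\geq\; \Delta_t\,D(\w^\ast),
\]
so that $\Delta_{t+1} = \Delta_t - (v_{t+1} - v_t) \leq \Delta_t\bigl(1 - D(\w^\ast)/D(\w_{t+1})\bigr)$. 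Since $v_{t+1} = \Pf_{(\ba,\bb)}(\w_{t+1}) \leq \Pf^\ast$, $\Delta_t \geq 0$ is preserved inductively from $v_0 = 0$, so the ratio is always well-defined.

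It remains to bound $D(\w^\ast)/D(\w_{t+1})$ uniformly from below by the worst-case ratio $D_{\min}/D_{\max}$ over $P(\w), N(\w) \in [0, m)$. The ``Range $P,N$'' column of Table~\ref{tab:pseudolinear-perf-list} is precisely the condition ensuring $D_{\min} > 0$ for each listed measure, which forces $\eta(m) := 1 - D_{\min}/D_{\max}$ into $[0, 1)$ and delivers $\Delta_t \leq \Delta_0\,\eta(m)^t$. A per-measure computation then recovers the explicit entries of the ``Rate $\eta(m)$'' column; for instance, for F$_\beta$ the extrema $D_{\max} = (\beta^2 + \theta + m)/(\beta^2 + \theta)$ at $(P,N)=(m,0)$ and $D_{\min} = (\beta^2 + \theta - \theta m)/(\beta^2 + \theta)$ at $(P,N)=(0,m)$ give $\eta(m) = m(1+\theta)/(m + \beta^2 + \theta)$. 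The main obstacle is really pinning down the identity in the first paragraph; once it is in hand the contraction and the per-measure rate both fall out, and the only residual check is the positivity of $D$, which the tabulated reward-range constraints enforce case by case.
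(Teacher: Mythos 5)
Your proof is correct and follows essentially the same route as the paper's: the exact identity $V_{(\ba,\bb)}(\w,v)-v=(\Pf_{(\ba,\bb)}(\w)-v)D(\w)$ you start from is precisely the mechanism behind the paper's two auxiliary lemmas (which establish $e_t \geq (1+f)(\Pf^\ast - v_t)$ and $\Pf(\w)\geq v+\tfrac{e}{1+g}$ by contradiction arguments), and your contraction factor $1-D_{\min}/D_{\max}$ coincides with the paper's $\tfrac{g-f}{1+g}$ under the substitution $D_{\min}=1+f$, $D_{\max}=1+g$, as does your per-measure computation for F$_\beta$. The only difference is presentational --- you fold the two lemmas into a single direct computation and do not invoke the paper's additional sign conditions on $\alpha-\Pf(\w)\gamma$ and $\beta-\Pf(\w)\delta$, which indeed are not needed for the contraction itself, only the positivity of $D$ that both arguments require.
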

The proof of this theorem can be found in Appendix~\ref{app:thm-altmax-conv-proof}. Table~\ref{tab:pseudolinear-perf-list} gives values for the convergence rates of all the pseudo-linear performance measures, as well as the allowed range of values that the reward functions can take for those measures. This is important since performance measures such as the F-measure diverge if the reward function values approach $2$. Other performance measures like the Gower-Legendre measure do not impose any such restrictions. Note that the above result shows that Algorithm~\ref{alg:am} will always terminate in $\O{\log\frac{1}{\epsilon}}$ steps.

At this point it would be apt to make a historical note. Pseudo-linear functions have enjoyed a fair amount of interest in the optimization community \cite{Schaible76,Dinkelbach67,Jagannathan66} within the sub-field of fractional programming. Of the many methods that have been developed to optimize these functions, the Dinkelbach-Jagannathan (DJ) procedure \cite{Dinkelbach67,Jagannathan66} is of specific interest to us. It turns out that the \am method can be seen as performing DJ-style updates over parameterized spaces (the parameter being the model $\w$). It is known (for instance see \cite{Schaible76}) that the DJ process is able to offer a linear convergence rates. Our proof of Theorem~\ref{thm:altmax-conv}, which was obtained independently, can then be seen as giving a similar result in the parameterized setting.

However, we wish to move one step further and optimize these performance measures in an online stochastic manner. To this end, we observe that the \am algorithm can be executed in an online fashion by using stochastic updates to train the intermediate models. The resulting algorithm \amsgd, is presented in Algorithm~\ref{alg:amsgd}. However, this algorithm is much harder to analyze because unlike \am which has the luxury of offering exact updates, \amsgd offers inexact, even noisy updates. Indeed, even existing works in the optimization community (for example \cite{Schaible76}) do not seem to have analyzed DJ-style methods with noisy updates.
\begin{figure*}[t!]
\centering\hspace*{-10pt}
\subfigure[IJCNN1]{
\includegraphics[scale=0.51]{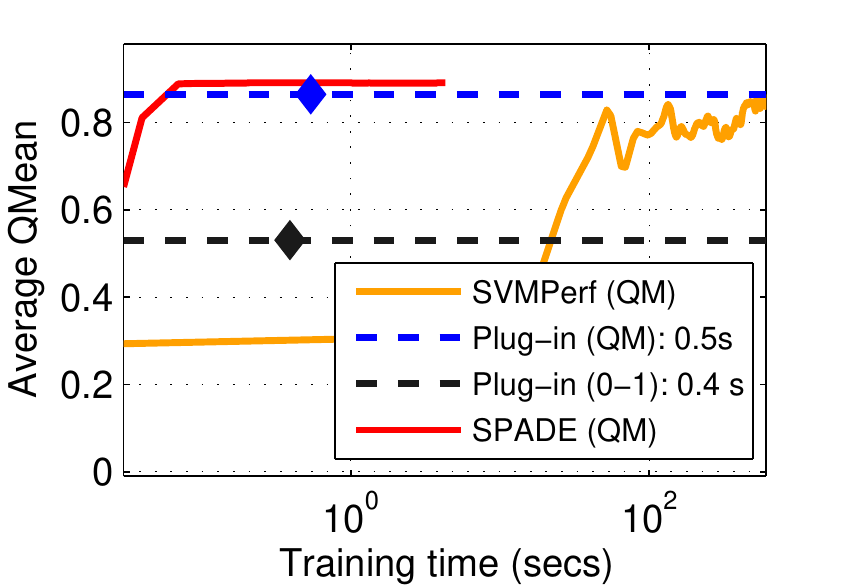}
\label{subfig:ijcnn1-QM}
}\hspace*{-10pt}
\subfigure[KDD08]{
\includegraphics[scale=0.51]{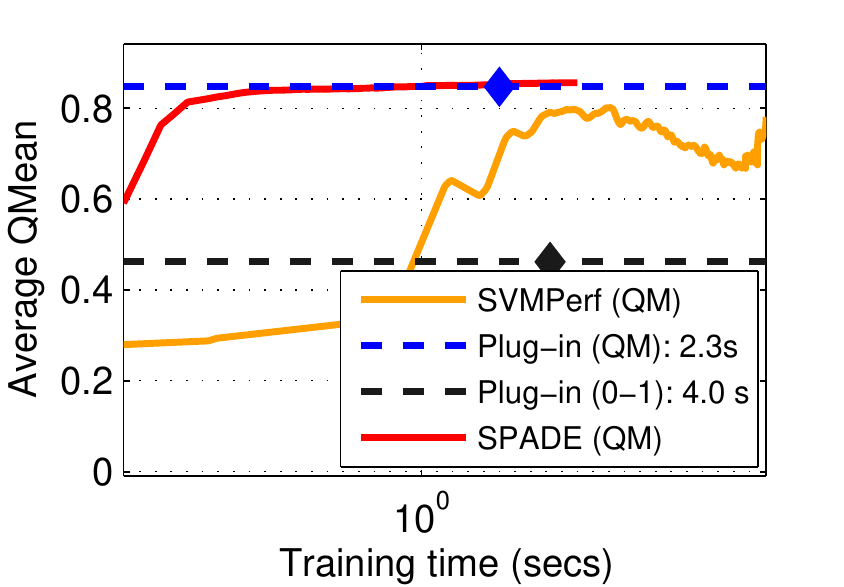}
\label{subfig:kdd08-QM}
}\hspace*{-10pt}
\subfigure[PPI]{
\includegraphics[scale=0.51]{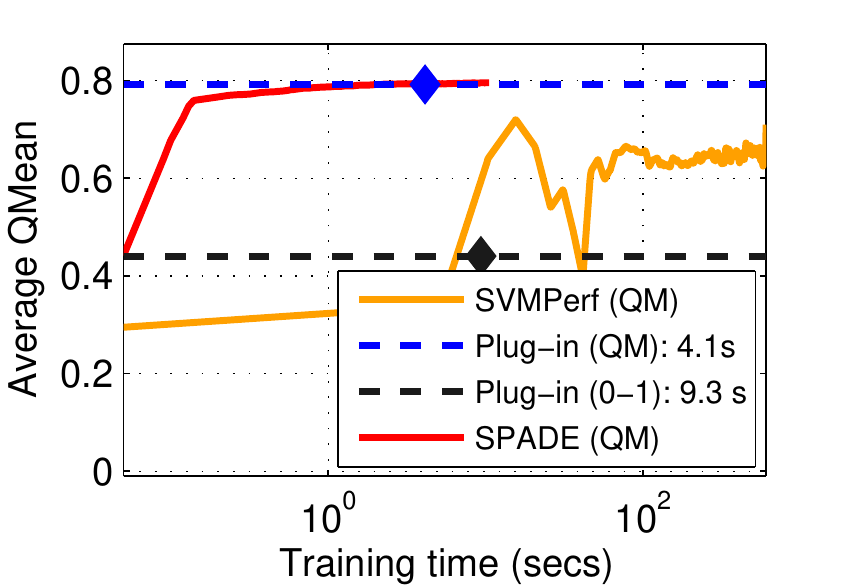}
\label{subfig:ppi-QM}
}\hspace*{-10pt}
\subfigure[Covtype]{
\includegraphics[scale=0.51]{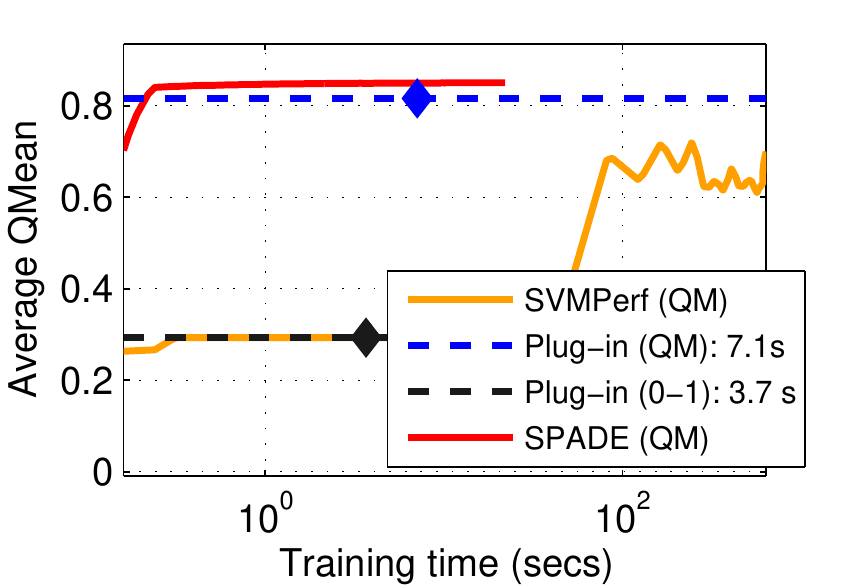}
\label{subfig:covtype-QM}
}
\caption{Comparison of stochastic primal-dual method (SPADE) with baseline methods on QMean maximization tasks. SPADE achieves similar/better accuracies while consistently requiring about 3-4x less time than other baseline approaches.}
\label{fig:QM}
\end{figure*}
\begin{figure*}[t!]
\centering\hspace*{-10pt}
\subfigure[IJCNN1]{
\includegraphics[scale=0.51]{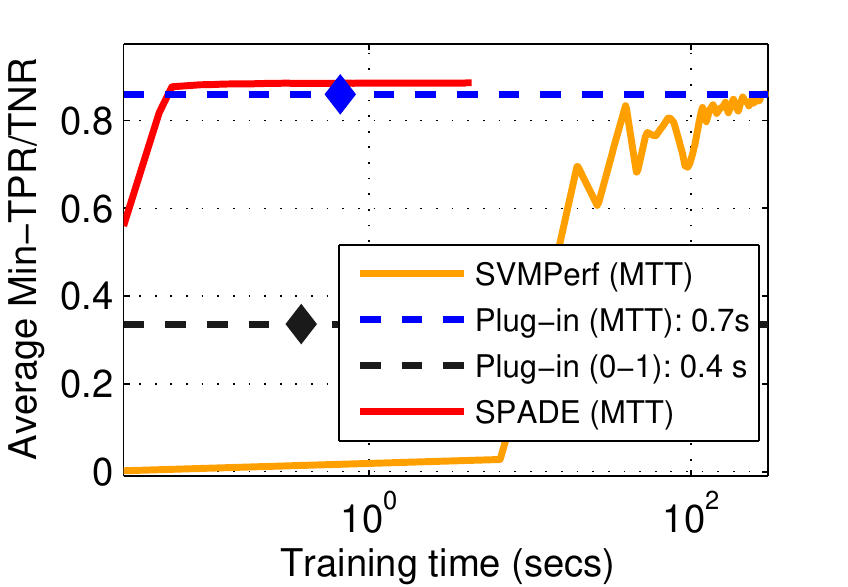}
\label{subfig:ijcnn1-MTT}
}\hspace*{-10pt}
\subfigure[KDD08]{
\includegraphics[scale=0.51]{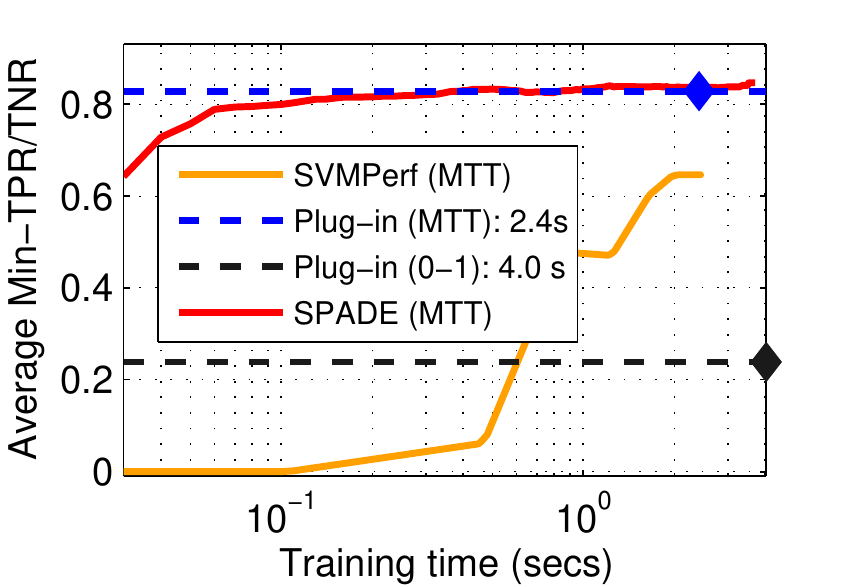}
\label{subfig:kdd08-MTT}
}\hspace*{-10pt}
\subfigure[PPI]{
\includegraphics[scale=0.51]{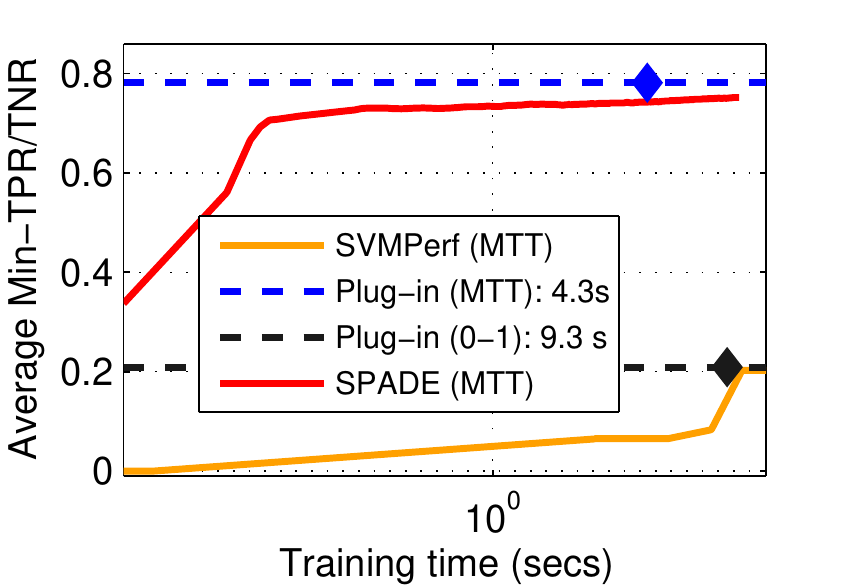}
\label{subfig:ppi-MTT}
}\hspace*{-10pt}
\subfigure[MNIST]{
\includegraphics[scale=0.51]{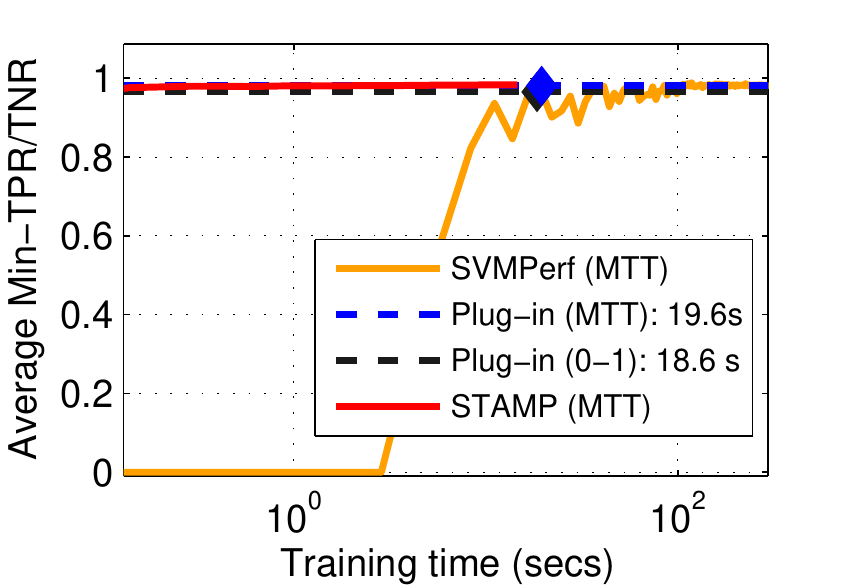}
\label{subfig:mnist-MTT}
}
\caption{Comparison of stochastic primal-dual method (SPADE) with baseline methods on Min-TPR/TNR maximization tasks}
\label{fig:MTT}
\end{figure*}

Our next contribution hence, is an analysis of the convergence rate offered by the \am algorithm when neither of the two maximizations is carried out exactly. For the sake of simplicity, we present the \amsgd algorithm and its analysis for the case of $F_1$ measure. Suppose at each time step, for some $\epsilon_t \geq 0,\delta_t$, we have
\begin{align*}
V(\w_{t+1},v_t) &= \max_{\w\in\W}\ V(\w,v_t) - \epsilon_t\\
v_t &= F(\w_t) + \delta_t,
\end{align*}
then for some $\eta < 1$, we have
\begin{align*}
\Delta_T \leq \eta^T\Delta_0 + \sum_{i=0}^{T-1}\eta^{T-i}\br{\abs{\delta_t} + \epsilon_t}
\end{align*}
 As a corollary we present a convergence analysis for the \amsgd algorithm in Theorem~\ref{thm:altmaxsgd-conv}.
\begin{thm}
\label{thm:altmaxsgd-conv}
Let Algorithm~\ref{alg:amsgd} be executed with a performance measure $\Pf_{(\ba,\bb)}$ and reward functions with range $[0,m)$. Let $\eta = \eta(m)$ be the rate of convergence guaranteed for $\Pf_{(\ba,\bb)}$ by the \am algorithm. Set the epoch lengths to $s_e,s'_e = \softO{\frac{1}{\eta^{2e}}}$. Then after $e = \log_{\frac{1}{\eta}}\br{\frac{1}{\epsilon}\log^2\frac{1}{\epsilon}}$ epochs, we can ensure with probability at least $1-\delta$ that $\Pf^\ast - \Pf_{(\ba,\bb)}(\w_e) \leq \epsilon$. Moreover the number of samples consumed till this point is at most $\softO{\frac{1}{\epsilon^2}}$.
\end{thm}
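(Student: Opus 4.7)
The plan is to pull the proof apart into three layers: first establish the noisy one-step recursion shown just above the theorem statement, then bound the per-epoch error terms $\epsilon_e$ and $|\delta_e|$ produced by the two inner loops of Algorithm~\ref{alg:amsgd} as a function of epoch length, and finally choose epoch lengths geometrically so that these noise terms track the contraction rate $\eta^e$ and collapse into the desired $\softO{1/\epsilon^2}$ sample complexity.

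First I would formalize the one-step recursion. In epoch $e$, the model-optimization phase runs SGD on the concave objective $V_{(\ba,\bb)}(\w, v_e) = c + (\alpha - v_e \gamma) P(\w) + (\beta - v_e \delta) N(\w)$, whose stochastic gradient is exactly the quantity used in Step 7 (after per-class normalization via $p, 1-p$). Since rewards are bounded in $[0,m)$ and the domain $\W$ has radius $R_\W$, standard SGD-with-averaging guarantees give $\max_{\w\in\W} V(\w,v_e) - V(\widetilde\w_e, v_e) \leq \epsilon_e$ with $\epsilon_e = \softO{\sqrt{\log(E/\delta)/s_e}}$, uniformly across the $E$ epochs via a union bound. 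The challenge-level phase forms empirical averages of $r^+$ and $r^-$ from $s'_e$ i.i.d. samples, so Hoeffding plus the explicit formula $v_e = 2v_+/(2+v_+ - v_-)$ (whose denominator is bounded away from zero under the range restriction in Table~\ref{tab:pseudolinear-perf-list}) yields $|\delta_e| = \softO{\sqrt{\log(E/\delta)/s'_e}}$. Plugging into the noisy Alt-Max inequality displayed in the excerpt gives the recursion $\Delta_{e+1} \leq \eta \Delta_e + c(\epsilon_e + |\delta_e|)$ for an absolute constant $c$ depending only on $(\ba,\bb)$ and $m$.

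Next I would unroll the recursion. With the prescribed choice $s_e, s'_e = \softO{1/\eta^{2e}}$ we have $\epsilon_e + |\delta_e| = \softO{\eta^e}$, and hence
\begin{equation*}
\Delta_E \;\leq\; \eta^E \Delta_0 + c\sum_{i=0}^{E-1} \eta^{E-i}\cdot \softO{\eta^i} \;=\; \softO{E\,\eta^E}.
\end{equation*}
Setting $E = \log_{1/\eta}\!\br{\tfrac{1}{\epsilon}\log^2\tfrac{1}{\epsilon}}$ gives $\eta^E = \epsilon/\log^2(1/\epsilon)$, so $E\eta^E = \softO{\epsilon/\log(1/\epsilon)} \leq \epsilon$ as required. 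The total sample count is dominated by the last epoch (geometric sum), giving $\sum_{e=0}^{E-1}(s_e + s'_e) = \softO{1/\eta^{2E}} = \softO{\log^4(1/\epsilon)/\epsilon^2} = \softO{1/\epsilon^2}$, which matches the claim.

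The hard part will be justifying the noisy one-step recursion itself, since the clean proof of Theorem~\ref{thm:altmax-conv} relies on exact equality $v_e = \Pf(\w_e)$ and on exactly maximizing $V(\cdot, v_e)$, neither of which survives here. I would derive it by writing $\Delta_{e+1} = \Pf^* - \Pf(\w_{e+1})$, expressing $\Pf^*$ via the level-finder identity $\Pf^* \leq v$ iff $\max_\w V(\w, v) \leq v$, and tracking two perturbations: the $\epsilon_e$-suboptimality in solving $\max_\w V(\w, v_e)$, and the offset $|\Pf(\w_e) - v_e| \leq |\delta_e|$ between the challenge level actually posed and the one Algorithm~\ref{alg:am} would have posed. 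Both perturbations propagate linearly through the (bounded) denominator of the pseudo-linear form, which is where the range restriction in Table~\ref{tab:pseudolinear-perf-list} pays off: it keeps the pseudo-linear sensitivity from blowing up. The contraction factor $\eta = \eta(m)$ then comes directly from the parametric Dinkelbach step, exactly as in Theorem~\ref{thm:altmax-conv}, so the noisy recursion inherits the same $\eta$ with only an additive $c(\epsilon_e + |\delta_e|)$ penalty.
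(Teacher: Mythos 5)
Your proposal is correct and follows essentially the same route as the paper's proof: concentration bounds giving $\abs{\delta_e},\epsilon_e = \softO{\sqrt{\log(E/\delta)/s_e}}$, geometric epoch lengths so that the noise tracks $\eta^e$, unrolling the noisy Alt-Max recursion to get $\Delta_E = \softO{E\,\eta^E}$, and a geometric sum for the $\softO{1/\epsilon^2}$ sample count. The one piece you sketch rather than prove --- the noisy recursion $\Delta_{e+1} \leq \eta\Delta_e + c(\epsilon_e + \abs{\delta_e})$ --- is likewise treated as an imported fact in the paper's proof (it is established separately via a case analysis on the signs of $\delta_t$ and $e_t$), so your outline matches the paper's structure.
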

The convergence analysis for noisy \am can be found in Appendix~\ref{app:noisy-am}. The proof of this theorem can be found in Appendix~\ref{app:thm-altmaxsgd-conv-proof}. Both results require a fine grained analysis of how errors accumulate throughout the learning process.


\begin{figure*}[t]
\centering\hspace*{-10pt}
\subfigure[IJCNN1]{
\includegraphics[scale=0.51]{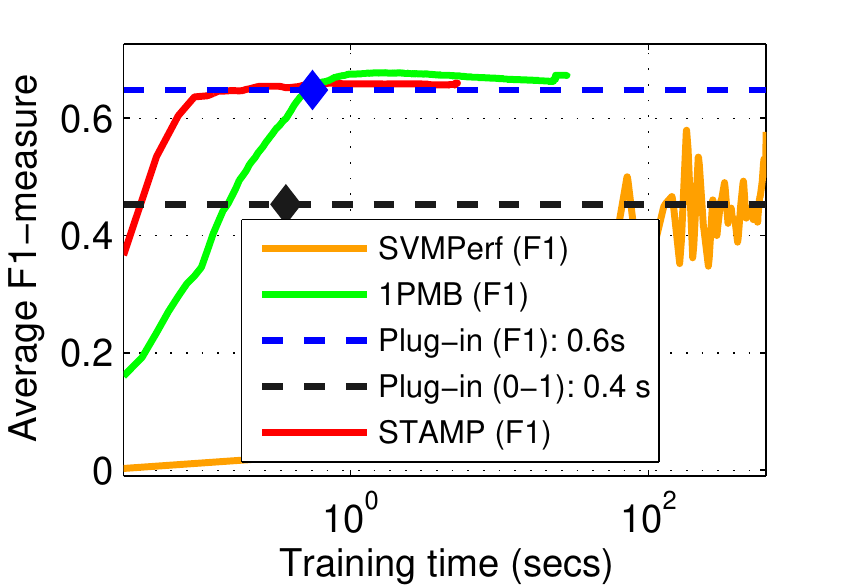}
\label{subfig:ijcnn1-F1}
}\hspace*{-10pt}
\subfigure[KDD08]{
\includegraphics[scale=0.51]{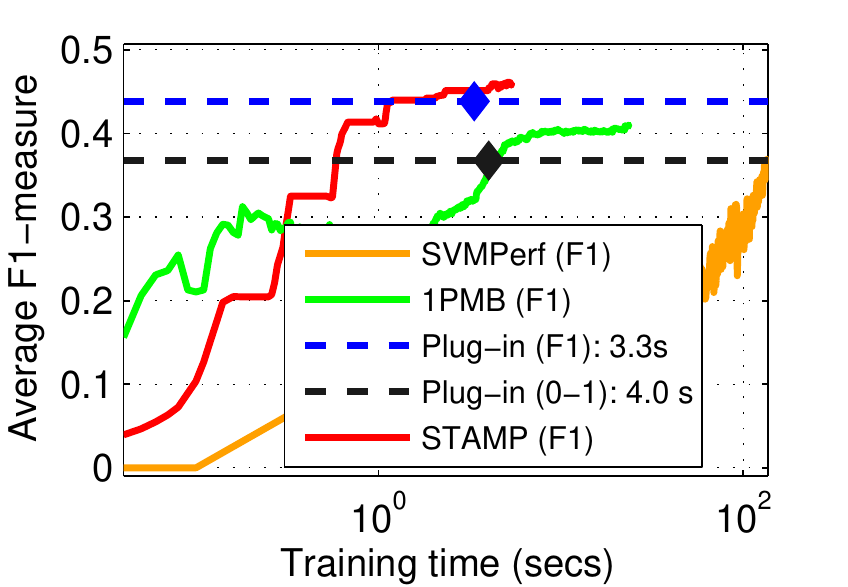}
\label{subfig:kdd08-F1}
}\hspace*{-10pt}
\subfigure[PPI]{
\includegraphics[scale=0.51]{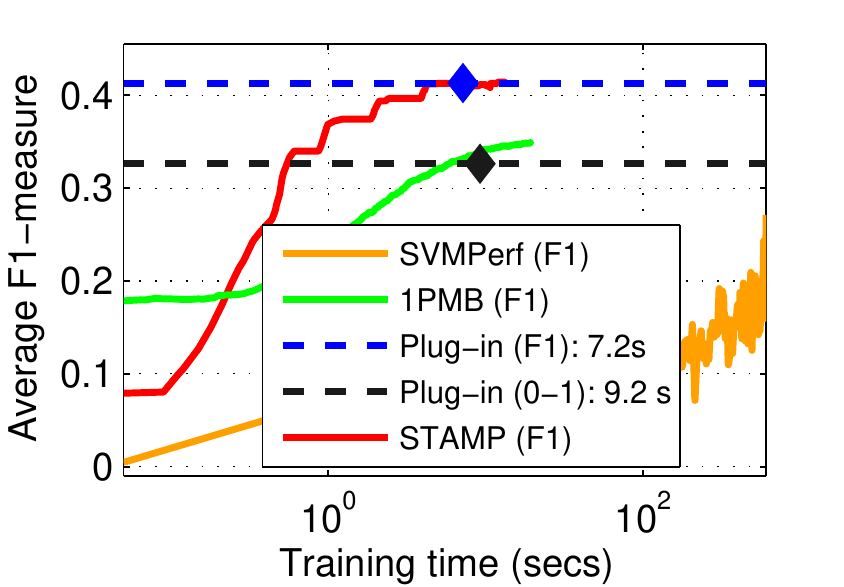}
\label{subfig:ppi-F1}
}\hspace*{-10pt}
\subfigure[Covtype]{
\includegraphics[scale=0.51]{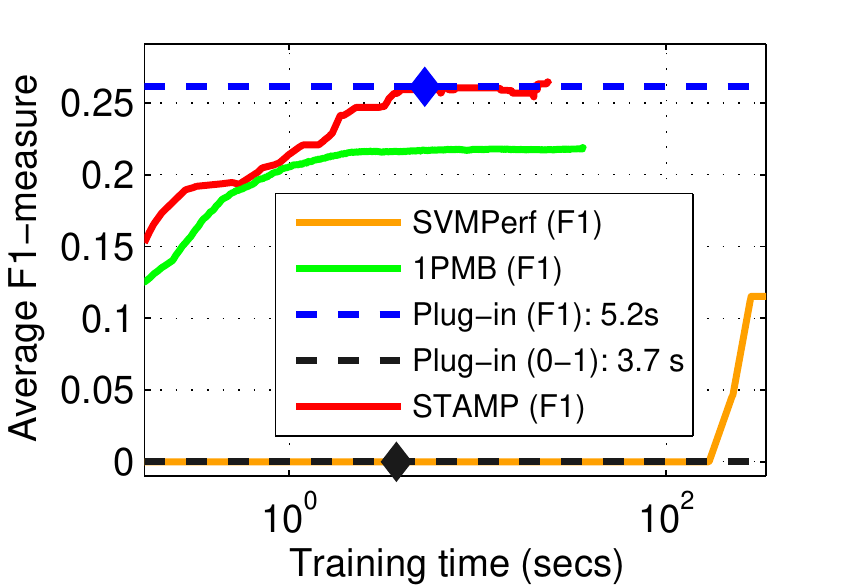}
\label{subfig:covtype-F1}
}
\caption{Comparison of stochastic alternating minimization procedure (STAMP) with baseline methods on F1 maximization tasks}
\label{fig:F1}
\end{figure*}
\begin{figure*}[t]
\centering\hspace*{-10pt}
\subfigure[IJCNN1]{
\includegraphics[scale=0.51]{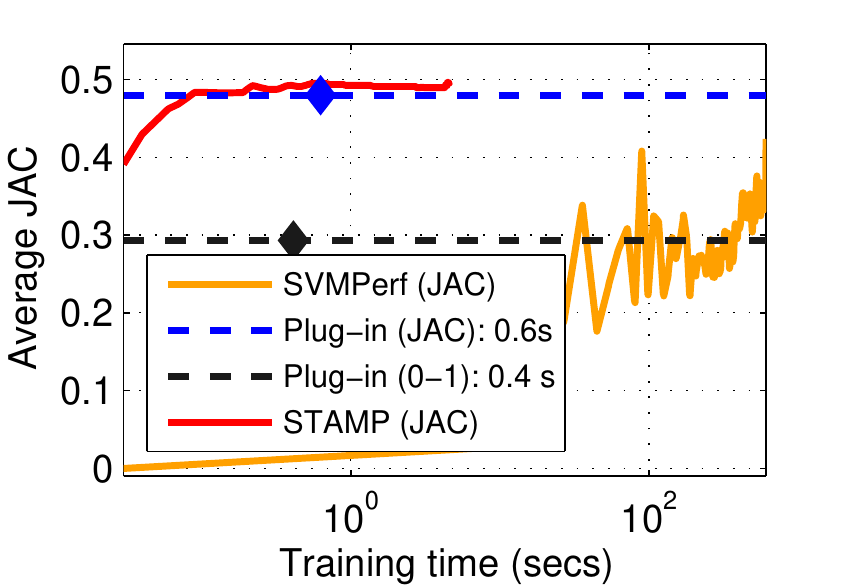}
\label{subfig:ijcnn1-JAC}
}\hspace*{-10pt}
\subfigure[KDD08]{
\includegraphics[scale=0.51]{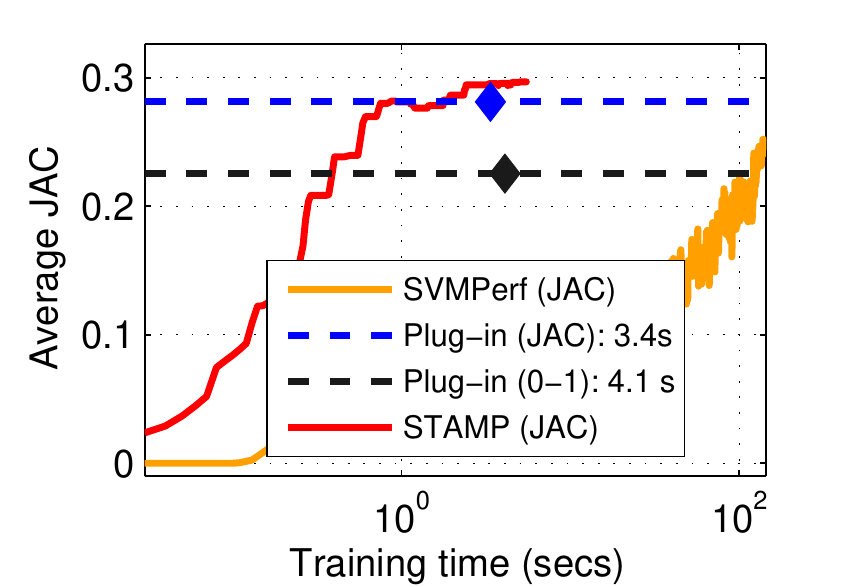}
\label{subfig:kdd08-JAC}
}\hspace*{-10pt}
\subfigure[PPI]{
\includegraphics[scale=0.51]{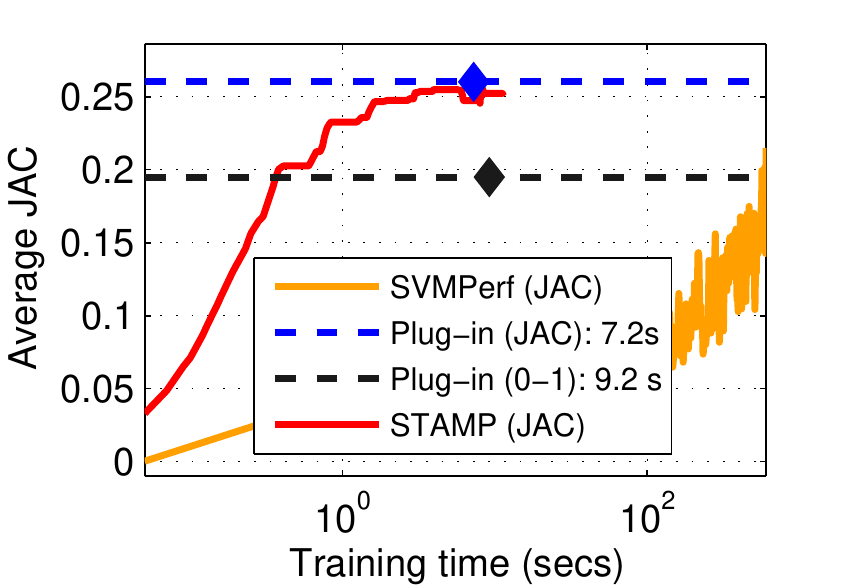}
\label{subfig:ppi-JAC}
}\hspace*{-10pt}
\subfigure[Covtype]{
\includegraphics[scale=0.51]{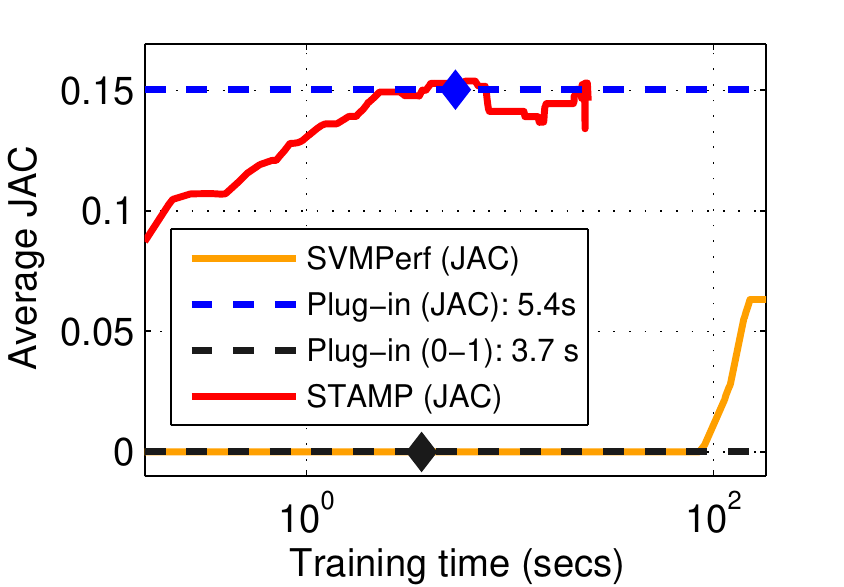}
\label{subfig:covtype-JAC}
}\vspace*{-5pt}
\caption{Comparison of stochastic alternating minimization procedure (STAMP) with baseline methods on JAC maximization tasks}
\label{fig:JAC}
\end{figure*}
\section{Experimental Results}
\label{sec:exps}
We shall now compare our methods with the state-of-the-art on various performance measures and datasets.

\textbf{Datasets}: We evaluated our methods on 5 publicly available benchmark datasets: a) PPI, b) KDD Cup 2008, c) IJCNN, d) Covertype, e) MNIST. All datasets exhibited moderate to severe label imbalance with the KDD Cup 2008 dataset having just $0.61\%$ positives.

\textbf{Methods}: We instantiated the \pdsgd algorithm (Algorithm~\ref{algo:spdu}) on the Q-mean and Min-TPR/TNR performance measures. We also instantiated the \amsgd method (Algorithm~\ref{alg:amsgd}) on F1-measure and the JAC coefficient. In both cases we compared to the SVMPerf method \cite{JoachimsFY09} and plug-in method \cite{KoyejoNRD14} specialized to these measures. For the sake of reference, we also compared to the standard logistic regression method for (unweighted) binary classification. Additionally for F1-measure, we also compared to the \spmb stochastic gradient descent method proposed recently by \cite{KarNJ14}. All methods were implemented in C.

\textbf{Parameters}:  We used $70\%$ of the dataset for training and the rest for testing. Tunable parameters, including thresholds for the plug-in approaches, were cross-validated on a validation set. All results reported here were averaged over 5 random train-test splits. We used hinge-loss based reward functions for our methods. \amsgd was executed by setting the challenge level to the actual F-measure/JAC at each stage. We used a state of the art LBFGS solver to implement the plug-in methods and used standard implementations of the SVMPerf algorithm. Since our methods are able to take a single pass over the data very rapidly, \pdsgd was allowed to run for 25 passes over the data and \amsgd was allowed 25 passes with an initial epoch length of $100$ which was doubled after every iteration. The SVMPerf algorithm was allowed a runtime of up to $50\times$ of that given to our method after which it was terminated. The LBFGS solver was always allowed to run till convergence. 

Figures~\ref{fig:QM}~and~\ref{fig:MTT} compare the \pdsgd method with the baseline methods for the Q-mean and Min-TPR/TNR measures. In general, \pdsgd was found to offer comparable or superior accuracies with greatly accelerated convergence as compared to other methods. On the IJCNN and Covtype datasets, \pdsgd outperformed every other method by about 2-3\%. As \pdsgd is a stochastic first order method, it is expected to rapidly find out a fairly accurate solution.  Indeed, the method was found to offer greatly accelerated convergence without fail. For instance, on the MNIST dataset, \pdsgd found out the best solution as much as $60\times$ faster than any other method whereas on the KDD Cup and PPI datasets it was $12\times$ and $2\times$ faster respectively. The SVMPerf method, on the other hand, was found to be extremely slow in general and require at least an order of magnitude time more than \pdsgd to find reasonably accurate solutions. It is also notable that in all cases, simple binary classification gave very poor accuracies due to the severe label imbalance in these datasets.

Figures~\ref{fig:F1}~and~\ref{fig:JAC} report the performance of the \amsgd method applied to pseudo-linear functions. Similar to the concave measures, \amsgd was found to provide competitive accuracies as compared to the baseline methods but require at least $3-4\times$ less computational time. Interestingly, for the F1-measure, the \spmb method, which is another stochastic gradient descent-based method, was found to struggle to obtain accuracies similar to that of \amsgd or else offer much slower convergence. We suspect two main reasons for the suboptimal behavior of this other stochastic method. Firstly these results confirm the adverse effect of the dependence on an in-memory buffer on these methods. It is notable that this dependence causes even the theoretical convergence rates for these methods to be weaker as was noted earlier in the discussion. Secondly, we note that both SVMPerf and \spmb optimize the same ``struct-SVM'' style surrogate for the F-measure \cite{KarNJ14}. This surrogate has been observed to give poor accuracies when compared to plug-in methods in several previous works \cite{KoyejoNRD14,NarasimhanVA14}. \amsgd on the other hand, works directly with F-measure in a manner similar to, but faster than, the plug-in methods which might explain its better performance.


\section*{Acknowledgements}
HN thanks support from a Google India PhD Fellowship.

\bibliographystyle{plainnat}
\bibliography{refs}

\begin{thebibliography}{26}
\providecommand{\natexlab}[1]{#1}
\providecommand{\url}[1]{\texttt{#1}}
\expandafter\ifx\csname urlstyle\endcsname\relax
  \providecommand{\doi}[1]{doi: #1}\else
  \providecommand{\doi}{doi: \begingroup \urlstyle{rm}\Url}\fi

\bibitem[Bishop(2006)]{Bishop06}
Christopher~M. Bishop.
\newblock \emph{Pattern Recognition and Machine Learning}.
\newblock Springer-Verlag New York, Inc., 2006.

\bibitem[Cesa-Bianchi et~al.(2001)Cesa-Bianchi, Conconi, and
  Gentile]{online-batch-single}
Nicol\'o Cesa-Bianchi, Alex Conconi, and Claudio Gentile.
\newblock {On the Generalization Ability of On-Line Learning Algorithms}.
\newblock In \emph{15th Annual Conference on Neural Information Processing
  Systems (NIPS)}, pages 359--366, 2001.

\bibitem[Daskalaki et~al.(2006)Daskalaki, Kopanas, and Avouris]{DaskalakiKA06}
Sophia Daskalaki, Ioannis Kopanas, and Nikolaos Avouris.
\newblock {Evaluation of Classifiers for an Uneven Class Distribution Problem}.
\newblock \emph{Applied Artificial Intelligence}, 20:\penalty0 381--417, 2006.

\bibitem[Dembczy\'nski et~al.(2013)Dembczy\'nski, Jachnik, Kotlowski, Waegeman,
  and H\"ullermeier]{DembczynskiJKWH13}
Krzysztof Dembczy\'nski, Arkadiusz Jachnik, Wojciech Kotlowski, Willem
  Waegeman, and Eyke H\"ullermeier.
\newblock {Optimizing the F-Measure in Multi-Label Classification: Plug-in Rule
  Approach versus Structured Loss Minimization}.
\newblock In \emph{30th International Conference on Machine Learning (ICML)},
  2013.

\bibitem[Dinkelbach(1967)]{Dinkelbach67}
Werner Dinkelbach.
\newblock {On Nonlinear Fractional Programming}.
\newblock \emph{Management Science}, 13\penalty0 (7, Series A,
  Sciences):\penalty0 492--498, 1967.

\bibitem[Hsu et~al.(2009)Hsu, Kakade, Langford, and Zhang]{HsuKLZ09}
Daniel Hsu, Sham Kakade, John Langford, and Tong Zhang.
\newblock {Multi-Label Prediction via Compressed Sensing}.
\newblock In \emph{23rd Annual Conference on Neural Information Processing
  Systems (NIPS)}, pages 772--780, 2009.

\bibitem[Jagannathan(1966)]{Jagannathan66}
R.~Jagannathan.
\newblock {On Some Properties of Programming Problems in Parametric Form
  Pertaining to Fractional Programming}.
\newblock \emph{Management Science}, 12\penalty0 (7, Series A,
  Sciences):\penalty0 609--615, 1966.

\bibitem[Jaggi et~al.(2014)Jaggi, Smith, Tak\'{a}c, Terhorst, Krishnan,
  Hofmann, and Jordan]{JaggiSTTKJ14}
Martin Jaggi, Virginia Smith, Martin Tak\'{a}c, Jonathan Terhorst, Sanjay
  Krishnan, Thomas Hofmann, and Michael~I. Jordan.
\newblock {Communication-Efficient Distributed Dual Coordinate Ascent}.
\newblock In \emph{28th Annual Conference on Neural Information Processing
  Systems (NIPS)}, pages 694--702, 2014.

\bibitem[Joachims et~al.(2009)Joachims, Finley, and Yu]{JoachimsFY09}
Thorsten Joachims, Thomas Finley, and Chun{-}Nam~John Yu.
\newblock {Cutting-plane training of structural SVMs}.
\newblock \emph{Machine Learning}, 77\penalty0 (1):\penalty0 27--59, 2009.

\bibitem[Kar et~al.(2013)Kar, Sriperumbudur, Jain, and Karnick]{KarSJK13}
Purushottam Kar, Bharath~K Sriperumbudur, Prateek Jain, and Harish Karnick.
\newblock {On the Generalization Ability of Online Learning Algorithms for
  Pairwise Loss Functions}.
\newblock In \emph{30th International Conference on Machine Learning (ICML)},
  2013.

\bibitem[Kar et~al.(2014)Kar, Narasimhan, and Jain]{KarNJ14}
Purushottam Kar, Harikrishna Narasimhan, and Prateek Jain.
\newblock {Online and Stochastic Gradient Methods for Non-decomposable Loss
  Functions}.
\newblock In \emph{28th Annual Conference on Neural Information Processing
  Systems (NIPS)}, pages 694--702, 2014.

\bibitem[Kennedy et~al.(2010)Kennedy, Namee, and Delany]{KennedyND09}
Kenneth Kennedy, Brian~Mac Namee, and Sarah~Jane Delany.
\newblock Learning without default: a study of one-class classification and the
  low-default portfolio problem.
\newblock In \emph{International Conference on Artificial Intelligence and
  Cognitive Science (ICAICS)}, volume 6202 of \emph{Lecture Notes in Computer
  Science}, pages 174--187, 2010.

\bibitem[Koyejo et~al.(2014)Koyejo, Natarajan, Ravikumar, and
  Dhillon]{KoyejoNRD14}
Oluwasanmi~O. Koyejo, Nagarajan Natarajan, Pradeep~K. Ravikumar, and
  Inderjit~S. Dhillon.
\newblock {Consistent Binary Classification with Generalized Performance
  Metrics}.
\newblock In \emph{28th Annual Conference on Neural Information Processing
  Systems (NIPS)}, pages 2744--2752, 2014.

\bibitem[Lewis(1995)]{Lewis95}
D.D. Lewis.
\newblock Evaluating and optimizing autonomous text classification systems.
\newblock In \emph{18th Annual International ACM SIGIR Conference on Research
  and Development in Information Retrieval (SIGIR)}, 1995.

\bibitem[Liu and Chawla(2011)]{LiuCh11}
W.~Liu and S.~Chawla.
\newblock {A Quadratic Mean based Supervised Learning Model for Managing Data
  Skewness}.
\newblock In \emph{11th SIAM International Conference on Data Mining (SDM)},
  2011.

\bibitem[Mahdavi et~al.(2013)Mahdavi, Yang, and Jin]{MahdaviYJ13}
Mehrdad Mahdavi, Tianbao Yang, and Rong Jin.
\newblock {Stochastic Convex Optimization with Multiple Objectives}.
\newblock In \emph{27th Annual Conference on Neural Information Processing
  Systems (NIPS)}, pages 1115--1123, 2013.

\bibitem[Manning et~al.(2008)Manning, Raghavan, and Sch\"{u}tze]{Manning+08}
C.~D. Manning, P.~Raghavan, and H.~Sch\"{u}tze.
\newblock \emph{{Introduction to Information Retrieval}}.
\newblock Cambridge University Press, 2008.

\bibitem[Narasimhan et~al.(2014)Narasimhan, Vaish, and Agarwal]{NarasimhanVA14}
Harikrishna Narasimhan, Rohit Vaish, and Shivani Agarwal.
\newblock {On the Statistical Consistency of Plug-in Classifiers for
  Non-decomposable Performance Measures}.
\newblock In \emph{28th Annual Conference on Neural Information Processing
  Systems (NIPS)}, 2014.

\bibitem[Parambath et~al.(2014)Parambath, Usunier, and
  Grandvalet]{ParambathUG14}
Shameem~Puthiya Parambath, Nicolas Usunier, and Yves Grandvalet.
\newblock {Optimizing F-Measures by Cost-Sensitive Classification}.
\newblock In \emph{28th Annual Conference on Neural Information Processing
  Systems (NIPS)}, pages 2123--2131, 2014.

\bibitem[Rakhlin et~al.(2011)Rakhlin, Sridharan, and Tewari]{RakhlinST11}
Alexander Rakhlin, Karthik Sridharan, and Ambuj Tewari.
\newblock {Online Learning: Beyond Regret}.
\newblock In \emph{24th Annual Conference on Learning Theory (COLT)}, 2011.

\bibitem[Schaible(1976)]{Schaible76}
Siegfried Schaible.
\newblock {Fractional Programming. II, on Dinkelbach's Algorithm}.
\newblock \emph{Management Science}, 22\penalty0 (8):\penalty0 868--873, 1976.

\bibitem[Shalev{-}Shwartz et~al.(2011)Shalev{-}Shwartz, Singer, Srebro, and
  Cotter]{ShwartzSSC11}
Shai Shalev{-}Shwartz, Yoram Singer, Nathan Srebro, and Andrew Cotter.
\newblock Pegasos: primal estimated sub-gradient solver for {SVM}.
\newblock \emph{Math. Program.}, 127\penalty0 (1):\penalty0 3--30, 2011.

\bibitem[Sokolova and Lapalme(2009)]{SokolovaL09}
Marina Sokolova and Guy Lapalme.
\newblock A systematic analysis of performance measures for classification
  tasks.
\newblock \emph{Information Processing \& Management}, 45\penalty0
  (4):\penalty0 427--437, 2009.

\bibitem[Vincent(1994)]{Vincent94}
P.H. Vincent.
\newblock \emph{An Introduction to Signal Detection and Estimation}.
\newblock Springer-Verlag New York, Inc., 1994.

\bibitem[Ye et~al.(2012)Ye, Chai, Lee, and Chieu]{YeCLC12}
Nan Ye, Kian Ming~A. Chai, Wee~Sun Lee, and Hai~Leong Chieu.
\newblock {Optimizing F-Measures: A Tale of Two Approaches}.
\newblock In \emph{29th International Conference on Machine Learning (ICML)},
  2012.

\bibitem[Zinkevich(2003)]{zinkevich}
Martin Zinkevich.
\newblock {Online Convex Programming and Generalized Infinitesimal Gradient
  Ascent}.
\newblock In \emph{20th International Conference on Machine Learning (ICML)},
  pages 928--936, 2003.

\end{thebibliography}

\appendix
\onecolumn
\allowdisplaybreaks

\section{Proof of Lemma~\ref{lem:dsr-stab}}
\label{app:lem-dsr-stab-proof}
\begin{replem}{lem:dsr-stab}
The stability parameter of a performance measure $\Psi(\cdot)$ can be written as $\delta(\epsilon) \leq L_\Psi\cdot\epsilon$ iff its sufficient dual region is bounded in a ball of radius $\Theta\br{L_\Psi}$.
\end{replem}
\begin{proof}
Let us denote primal variables using the notation $\x = (u,v)$ and dual variables using the notation $\btheta = (\alpha,\beta)$. The proof follows from the fact that any value of $\btheta$ for which $\Psi^\ast(\btheta) = -\infty$ can be safely excluded from the sufficient dual region.

For proving the result in one direction suppose $\Psi$ is stable with $\delta(\epsilon) = L\epsilon$ for some $L > 0$. Now consider some $\btheta \in \R^2$ such that $\norm{\btheta}_2 \geq L$. Now set $\x_C = -C\cdot\btheta$. Then we have
\begin{align*}
\Psi^\ast(\btheta) &= \inf_{\x}\bc{\ip{\btheta}{\x} - \Psi(\x)}\\
				   &\leq \inf_{C > 0}\bc{\ip{\btheta}{\x_C} - \Psi(\x_C)}\\
				   &= \inf_{C > 0}\bc{-C\norm{\btheta}_2^2 - \Psi(\x_C)}\\
				   &\leq \inf_{C > 0}\bc{-C\norm{\btheta}_2^2 - \Psi(\vz) + CL\norm{\btheta}_\infty}\\
				   &\leq \inf_{C > 0}\bc{-C\norm{\btheta}_2^2 - \Psi(\vz) + CL\norm{\btheta}_2}\\
				   &= \inf_{C > 0}\bc{-C\norm{\btheta}_2(\norm{\btheta}_2 - L)} - \Psi(\vz)\\
				   &\leq \inf_{C > 0}\bc{-C\norm{\btheta}_2 - \Psi(\vz)}\\
				   &= -\infty
\end{align*}
Thus, we can conclude that no dual vector with norm greater than $L$ can be a part of the sufficient dual region. This shows that the sufficient dual region is bounded inside a ball of radius $L$. For proving the result in the other direction, suppose the dual sufficient region is indeed bounded in a ball of radius $R$. Consider two points $\x_1,\x_2$ such that
\begin{align*}
\btheta^\ast_1 &= \mathop{\arg\min}_{\btheta\in\A_\Psi}\bc{\ip{\btheta}{\x_1} - \Psi^\ast(\btheta)}\\
\btheta^\ast_2 &= \mathop{\arg\min}_{\btheta\in\A_\Psi}\bc{\ip{\btheta}{\x_2} - \Psi^\ast(\btheta)}\\
\end{align*}
Now define $f(\btheta,\x) := \ip{\btheta}{\x} - \Psi^\ast(\btheta)$ so that, by the above definition, $f(\btheta^\ast_1,\x_1) = \Psi(\x_1)$ and $f(\btheta^\ast_2,\x_2) = \Psi(\x_2)$. Now we have
\begin{align*}
\Psi(\x_1) &= f(\btheta^\ast_1,\x_1) \leq f(\btheta^\ast_2,\x_1)\\
		   &\leq f(\btheta^\ast_2,\x_2) + \abs{\ip{\btheta^\ast_2}{\x_1-\x_2}}\\
		   &= \Psi(\x_2) + \abs{\ip{\btheta^\ast_2}{\x_1-\x_2}}\\
		   &\leq \Psi(\x_2) + R\norm{\x_1-\x_2}_2,
\end{align*}
where the fourth step follows from the norm bound on $\btheta^\ast_2$. Similarly we have
\[
\Psi(\x_2) \leq \Psi(\x_1) + R\norm{\x_1-\x_2}_2
\]
This establishes the result.
\end{proof}

\section{Proof of Theorem~\ref{thm:pdsgd-risk-analysis}}
\label{app:thm-pdsgd-risk-analysis-proof}
\begin{repthm}{thm:pdsgd-risk-analysis}
Suppose we are given a stream of random samples $(\x_1,y_1),\ldots,(\x_T,y_T)$ drawn from a distribution $\D$ over $\X\times\Y$. Let $\Psi(\cdot)$ be a concave, Lipschitz link function. Let Algorithm~\ref{algo:spdu} be executed with a dual feasible set $\A \supseteq \A_\Psi$, $\eta_t = 1/\sqrt t$ and $\eta'_t = 1/\sqrt t$. Then, the average model $\barw = \frac{1}{T}\sum_{t=1}^T \w_t$ output by the algorithm satisfies, with probability at least $1 - \delta$,
\[
\Pf_\Psi(\barw) \geq \sup_{\w^\ast \in \W}\Pf_\Psi(\w^\ast) -  \delta_\Psi\br{\sqrt{\frac{2B_r^2}{T}\log\frac{1}{\delta}}} - \br{L_\Psi^2 + 4B_r^2}\frac{1}{2\sqrt T} - \br{L_\Psi^2L_r^2 + R_\W^2}\frac{1}{2\sqrt T} - \sqrt{\frac{2L_\Psi^2B_r^2}{T}\log\frac{1}{\delta}}.
\]
\end{repthm}
\begin{proof}
For this proof we shall assume that $\Psi$ is $L_\Psi$-Lipschitz so that its sufficient dual region can be bounded by an application of Lemma~\ref{lem:dsr-stab}. Notice that the updates for $(\alpha,\beta)$ can be written as follows:
\[
(\alpha_{t+1},\beta_{t+1}) \leftarrow \Pi_{\A_\Psi}\br{(\alpha_{t},\beta_{t}) - \eta_t\nabla_{(\alpha,\beta)}\ell^d_t(\alpha_t,\beta_t)},
\]
where
\[
\ell^d_t(\alpha,\beta) = \left\{
  \begin{array}{l l}
    \alpha r^+(\w_t; \x_t, y_t) - \Psi^*(\alpha, \beta) & \quad \text{if $y_t > 0$}\\
    \beta r^-(\w_t; \x_t, y_t) - \Psi^*(\alpha, \beta) & \quad \text{if $y_t < 0$}
  \end{array} \right.
\]
which can be interpreted as simple gradient descent with $\ell_t$. Moreover, since $\Psi^\ast$ is concave, $\ell^d_t$ is convex with respect to $(\alpha, \beta)$ for every $t$. Note that the terms $r^+(\w_t; \x_t, y_t)$ and $r^-(\w_t; \x_t, y_t)$ do not involve $\alpha,\beta$ and hence act as arbitrary bounded positive constants for this part of the analysis.

Note that by Lemma~\ref{lem:dsr-stab}, we have the radius of $\A_\Psi$ bounded by $L_\Psi$. Also, since $\Psi$ is a monotone function, by a similar argument, $\Psi^*(\alpha, \beta)$ can be shown to be a $\Psi(B_r,B_r)$-Lipschitz function. For all the performance measures considered, we have $\Psi(B_r,B_r) \leq B_r$. Thus, $\ell^d_t(\alpha,\beta)$ is a $2B_r$-Lipschitz function. Hence, using a standard GIGA-style analysis \cite{zinkevich} on the (descent) updates on $\alpha_t$ and $\beta_t$ in Algorithm~\ref{algo:spdu}, we have (for $\eta_t =  \frac{1}{\sqrt t}$)
\begin{eqnarray}
\lefteqn{
\frac{1}{T}\sum_{t=1}^T \big[\alpha_t r^+(\w_t; \,\x_t, y_t) \,+\, \beta_t r^-(\w_t; \,\x_t, y_t) \,-\, \Psi^*(\alpha_t, \beta_t)\big]
}\hspace{1cm}
\nonumber
\\
&\leq& \inf_{(\alpha, \beta) \in \A} \Big\{
	\frac{1}{T}\sum_{t=1}^T \big[\alpha r^+(\w_t; \,\x_t, y_t) \,+\, \beta r^-(\w_t; \,\x_t, y_t) \,-\, \Psi^*(\alpha, \beta)\big]
			\Big\} ~+~ \br{L_\Psi^2 + 4B_r^2}\frac{1}{2\sqrt T} \nonumber\\
&=& \inf_{(\alpha, \beta) \in \A} \bigg\{
	\alpha \frac{1}{T}\sum_{t=1}^T r^+(\w_t; \,\x_t, y_t) \,+\, \beta \frac{1}{T}\sum_{t=1}^T  r^-(\w_t; \,\x_t, y_t) \,-\, \Psi^*(\alpha, \beta)
			\bigg\} ~+~ \br{L_\Psi^2 + 4B_r^2}\frac{1}{2\sqrt T} \nonumber\\
&=& \Psi\bigg(\frac{1}{T}\sum_{t=1}^T r^+(\w_t; \,\x_t, y_t), \,\frac{1}{T}\sum_{t=1}^T  r^-(\w_t; \,\x_t, y_t)\bigg) ~+~ \br{L_\Psi^2 + 4B_r^2}\frac{1}{2\sqrt T},
\nonumber
\end{eqnarray}
where the last step follows from Fenchel conjugacy.

Further, noting that $\Eemp{\x_{t}, y_{t}}{r^+(\w_t; \,\x_t, y_t) \,\big|\, \x_{1:t-1}, y_{1:t-1}}  \,=\, P(\w_t)$, and $\Eemp{\x_{t}, y_{t}}{r^-(\w_t; \,\x_t, y_t) \,\big|\, \x_{1:t-1}, y_{1:t-1}}$ $\,=\, N(\w_t)$, we use the standard online-batch conversion bounds \cite{online-batch-single} to the loss functions $r^+$ and $r^-$ individually to obtain w.h.p.
\[
\frac{1}{T}\sum_{t=1}^T r^+(\w_t; \,\x_t, y_t) \leq \sum_{t=1}^T P(\w_t) + \sqrt{\frac{2B_r^2}{T}\log\frac{1}{\delta}}
\]
\[
\frac{1}{T}\sum_{t=1}^T  r^-(\w_t; \,\x_t, y_t) \leq \sum_{t=1}^T N(\w_t) + \sqrt{\frac{2B_r^2}{T}\log\frac{1}{\delta}}
\]
By monotonicity of $\Psi$, we get
\begin{eqnarray}
\lefteqn{
\frac{1}{T}\sum_{t=1}^T \big[\alpha_t r^+(\w_t; \,\x_t, y_t) \,+\, \beta_t r^-(\w_t; \,\x_t, y_t) \,-\, \Psi^*(\alpha_t, \beta_t)\big]
}\hspace{1cm}
\nonumber
\\
&\leq&
\Psi\bigg(\frac{1}{T}\sum_{t=1}^T P(\w_t) +  \sqrt{\frac{2B_r^2}{T}\log\frac{1}{\delta}}, ~ \frac{1}{T}\sum_{t=1}^T  N(\w_t) +  \sqrt{\frac{2B_r^2}{T}\log\frac{1}{\delta}} \bigg)  \,+\,  \br{L_\Psi^2 + 4B_r^2}\frac{1}{2\sqrt T}
\nonumber
\\
&\leq&
\Psi\bigg(\frac{1}{T}\sum_{t=1}^T P(\w_t), ~ \frac{1}{T}\sum_{t=1}^T  N(\w_t) \bigg)  \,+\,  \delta_\Psi\bigg(\sqrt{\frac{2B_r^2}{T}\log\frac{1}{\delta}}\bigg) \,+\, \br{L_\Psi^2 + 4B_r^2}\frac{1}{2\sqrt T}
\nonumber
\\
&\leq&
\Psi\bigg(\bar{r}^+\bigg(\frac{1}{T}\sum_{t=1}^T \w_t\bigg), ~ \bar{r}^-\bigg(\frac{1}{T}\sum_{t=1}^T \w_t\bigg) \bigg)  \,+\,  \delta_\Psi\bigg(\sqrt{\frac{2B_r^2}{T}\log\frac{1}{\delta}}\bigg) \,+\, \br{L_\Psi^2 + 4B_r^2}\frac{1}{2\sqrt T}
\nonumber
\\
&=&
\Psi\big(P(\barw), ~  N(\barw) \big)  \,+\,  \delta_\Psi\bigg(\sqrt{\frac{2B_r^2}{T}\log\frac{1}{\delta}}\bigg) \,+\, \br{L_\Psi^2 + 4B_r^2}\frac{1}{2\sqrt T},
\label{eqn:giga-1}
\end{eqnarray}
where the second inequality follows from stability of $\Psi$, and the third inequality follows from concavity of $\bar{r}^+$ and $\bar{r}^-$, Jensen's inequality, and stability of $\Psi$.

Similarly, the update to $\w$ can be written as
\[
\w_{t+1} \leftarrow \Pi_\W\br{\w_t - \eta'_t \nabla_\w\ell^p_t(\w_t)}, 
\]
where $\Pi_\W$ is the projection operator for the domain $\W$ and
\[
\ell^p_t(\w) = \left\{
  \begin{array}{l l}
    -\alpha_t r^+(\w; \x_t, y_t)  + \Psi^*(\alpha_t, \beta_t) & \quad \text{if $y_t > 0$}\\
    -\beta_t r^-(\w; \x_t, y_t) + \Psi^*(\alpha_t, \beta_t) & \quad \text{if $y_t < 0$}
  \end{array} \right.
\]
Since $r^+,r^-$ are concave and the term $\Psi^*(\alpha_t, \beta_t)$ does not involve $\w$, $\ell^p_t$ is convex in $\w$ for all $t$. Also, we can show that $\ell^p_t(\w)$ is an $\br{L_\Psi\cdot L_r}$-Lipschitz function. Hence, applying a standard GIGA analysis \cite{zinkevich} to the (ascent) update on $\w_t$ in Algorithm 1 (with $\eta_t' = \frac{1}{\sqrt t}$), we have for any $\w^* \in \W$,
\begin{eqnarray*}
\lefteqn{\frac{1}{T}\sum_{t=1}^T \big[\alpha_t r^+(\w_t; \x_t, y_t) \,+\, \beta_t r^-(\w_t; \x_t, y_t) \,-\, \Psi^*(\alpha_t, \beta_t)\big]}
\nonumber\\
	& \geq & \frac{1}{T}\sum_{t=1}^T \big[\alpha_t r^+(\w^*; \x_t, y_t)
				\,+\, \beta_t r^-(\w^*; \x_t, y_t) \,-\, \Psi^*(\alpha_t, \beta_t)\big] ~-~ \br{L_\Psi^2L_r^2 + R_\W^2}\frac{1}{2\sqrt T}. \nonumber
\end{eqnarray*}
Again, observing that by linearity of expectation, we have
\[
\Eemp{\x_{t}, y_{t}}{\alpha_t r^+(\w^*; \x_t, y_t) + \beta_t r^-(\w^*; \x_t, y_t) \,\big|\, \x_{1:t-1}, y_{1:t-1}}  \,=\, \alpha_t P(\w^*) + \beta_t N(\w^*),
\]
which gives us, through an online-batch conversion argument \cite{online-batch-single} w.h.p,
\begin{eqnarray}
\lefteqn{\frac{1}{T}\sum_{t=1}^T \big[\alpha_t r^+(\w_t; \x_t, y_t) \,+\, \beta_t r^-_t(\w_t; \x_t, y_t) \,-\, \Psi^*(\alpha_t, \beta_t)\big]} 
\hspace{1cm}
\nonumber\\
	& \geq & \frac{1}{T}\sum_{t=1}^T \big[\alpha_t P(\w^*)
				\,+\, \beta_t N(\w^*)\big] \,-\, \frac{1}{T}\sum_{t=1}^T  \Psi^*(\alpha_t, \beta_t)  ~-~ \sqrt{\frac{2L_\Psi^2B_r^2}{T}\log\frac{1}{\delta}} - \br{L_\Psi^2L_r^2 + R_\W^2}\frac{1}{2\sqrt T}\nonumber\\
	& \geq & \frac{1}{T}\sum_{t=1}^T \big[\alpha_t P(\w^*)
				\,+\, \beta_t N(\w^*)\big] \,-\, \Psi^*\bigg(\frac{1}{T}\sum_{t=1}^T \alpha_t,  \frac{1}{T}\sum_{t=1}^T \beta_t \bigg)  ~-~  \sqrt{\frac{2L_\Psi^2B_r^2}{T}\log\frac{1}{\delta}} - \br{L_\Psi^2L_r^2 + R_\W^2}\frac{1}{2\sqrt T}\nonumber\\
	& = & \bar{\alpha} P(\w^*)
				\,+\, \bar{\beta}N(\w^*) \,-\, \Psi^*(\bar{\alpha},  \bar{\beta})  ~-~ \sqrt{\frac{2L_\Psi^2B_r^2}{T}\log\frac{1}{\delta}} - \br{L_\Psi^2L_r^2 + R_\W^2}\frac{1}{2\sqrt T}\nonumber\\
	&\geq& \inf_{\alpha, \beta} \Big\{ \alpha P(\w^*)
				\,+\, \beta N(\w^*) \,-\, \Psi^*(\alpha,  \beta)\Big\}  ~-~ \sqrt{\frac{2L_\Psi^2B_r^2}{T}\log\frac{1}{\delta}} - \br{L_\Psi^2L_r^2 + R_\W^2}\frac{1}{2\sqrt T}\nonumber\\
	&=& \Psi\big( P(\w^*), \,N(\w^*)\big)  ~-~ \sqrt{\frac{2L_\Psi^2B_r^2}{T}\log\frac{1}{\delta}} - \br{L_\Psi^2L_r^2 + R_\W^2}\frac{1}{2\sqrt T},\label{eqn:giga-2}
\end{eqnarray}
where the second step follows from concavity of $\Psi$ and Jensen's inequality, in the third step $\bar{\alpha} = \frac{1}{T}\sum_{t=1}^T \alpha_t$ and $\bar{\beta} = \frac{1}{T}\sum_{t=1}^T \beta_t$, and the last step follows from Fenchel conjugacy.

Combining Eq. (\ref{eqn:giga-1}) and (\ref{eqn:giga-2}) gives us the desired result.
\end{proof}

\section{Proof of Theorem~\ref{thm:pdsgd-risk-analysis-non-lip}}
\label{thm-pdsgd-risk-analysis-non-lip-proof}
\begin{repthm}{thm:pdsgd-risk-analysis-non-lip}
Suppose we have the problem setting in Theorem~\ref{thm:pdsgd-risk-analysis} with the $\Psi_{\text{G-mean}}$ performance measure being optimized for. Consider a modification to Algorithm~\ref{algo:spdu} wherein the reward functions are changed to $r^+_t(\cdot) = r^+(\cdot) + \epsilon(t)$, and $r^-_t(\cdot) = r^-(\cdot) + \epsilon(t)$ for $\epsilon(t) = \frac{1}{t^{1/4}}$. Then, the average model $\barw = \frac{1}{T}\sum_{t=1}^T \w_t$ output by the algorithm satisfies, with probability at least $1 - \delta$,
\[
\Pf_{\Psi_{\text{G-mean}}}(\barw) \geq \sup_{\w^\ast \in \W}\Pf_{\Psi_{\text{G-mean}}}(\w^\ast) - \softO{\frac{1}{T^{1/4}}}.
\]
\end{repthm}
\begin{proof}
Suppose $\Psi(u+\epsilon,v+\epsilon) \leq \Psi(u,v) + \delta_\Psi(\epsilon)$ as before. Let $r^+_t(\cdot) = r^+(\cdot) + \epsilon(t)$, and $r^-_t(\cdot) = r^-(\cdot) + \epsilon(t)$. Let us make all updates with respect to $r^+_t,r^-_t$. Let $r(\epsilon)$ be the radius of the sufficient dual domain $\A$ for a given regularization $\epsilon$. Also let $\bar\epsilon = \frac{1}{T}\sum_{i=1}^T\epsilon(t)$. We will assume throughout that $\epsilon(t) = O(1)$. Then we have:

\begin{eqnarray}
\lefteqn{
\frac{1}{T}\sum_{t=1}^T \big[\alpha_t r^+_t(\w_t; \,\x_t, y_t) \,+\, \beta_t r^-_t(\w_t; \,\x_t, y_t) \,-\, \Psi^*(\alpha_t, \beta_t)\big]
}\hspace{1cm}
\nonumber
\\
&\leq& \inf_{(\alpha, \beta) \in \A} \Big\{
	\frac{1}{T}\sum_{t=1}^T \big[\alpha r^+_t(\w_t; \,\x_t, y_t) \,+\, \beta r^-_t(\w_t; \,\x_t, y_t) \,-\, \Psi^*(\alpha, \beta)\big]
			\Big\} ~+~ \O{\frac{r(\bar\epsilon)}{\sqrt{T}}} \nonumber\\
&=& \inf_{(\alpha, \beta) \in \A} \bigg\{
	\alpha \frac{1}{T}\sum_{t=1}^T r^+(\w_t; \,\x_t, y_t) + \bar\epsilon \,+\, \beta \frac{1}{T}\sum_{t=1}^T  r^-(\w_t; \,\x_t, y_t) + \bar\epsilon \,-\, \Psi^*(\alpha, \beta)
			\bigg\} ~+~ \O{\frac{r(\bar\epsilon)}{\sqrt{T}}} \nonumber\\
&=& \Psi\bigg(\frac{1}{T}\sum_{t=1}^T r^+(\w_t; \,\x_t, y_t) + \bar\epsilon, \,\frac{1}{T}\sum_{t=1}^T  r^-(\w_t; \,\x_t, y_t) + \bar\epsilon \bigg) ~+~ \O{\frac{r(\bar\epsilon)}{\sqrt{T}}}\\
&=& \Psi\bigg(\frac{1}{T}\sum_{t=1}^T r^+(\w_t; \,\x_t, y_t), \,\frac{1}{T}\sum_{t=1}^T  r^-(\w_t; \,\x_t, y_t) \bigg) ~+~ \delta_\Psi(\bar\epsilon) ~+~ \O{\frac{r(\bar\epsilon)}{\sqrt{T}}}
\nonumber
\end{eqnarray}
We can now use online to batch conversion bounds \cite{online-batch-single}, and monotonicity of $\Psi$ to get
\begin{eqnarray}
\lefteqn{
\frac{1}{T}\sum_{t=1}^T \big[\alpha_t r^+(\w_t; \,\x_t, y_t) \,+\, \beta_t r^-(\w_t; \,\x_t, y_t) \,-\, \Psi^*(\alpha_t, \beta_t)\big]
}\hspace{1cm}
\nonumber
\\
&\leq&
\Psi\big(P(\barw), ~  N(\barw) \big)  \,+\,  \delta_\Psi\bigg(\softO{\frac{1}{\sqrt{T}}}\bigg) \,+\, \delta_\Psi(\bar\epsilon) ~+~ \O{\frac{r(\bar\epsilon)}{\sqrt{T}}},
\label{eqn:giga-11}
\end{eqnarray}

For the primal updates, we get, for any $\w^* \in \W$,
\begin{eqnarray*}
\lefteqn{\frac{1}{T}\sum_{t=1}^T \big[\alpha_t r^+_t(\w_t; \x_t, y_t) \,+\, \beta_t r^-_t(\w_t; \x_t, y_t) \,-\, \Psi^*(\alpha_t, \beta_t)\big]}
\nonumber\\
	& \geq & \frac{1}{T}\sum_{t=1}^T \big[\alpha_t r^+_t(\w^*; \x_t, y_t)
				\,+\, \beta_t r^-_t(\w^*; \x_t, y_t) \,-\, \Psi^*(\alpha_t, \beta_t)\big] ~-~ \softO{\frac{r(\bar\epsilon)}{\sqrt{T}}} \\
				& = & \frac{1}{T}\sum_{t=1}^T \big[\alpha_t r^+(\w^*; \x_t, y_t)
				\,+\, \beta_t r^-(\w^*; \x_t, y_t) \,-\, \Psi^*(\alpha_t, \beta_t)\big] + \frac{1}{T}\sum_{t=1}^T\epsilon(t)(\alpha_t+\beta_t) ~-~ \softO{\frac{r(\bar\epsilon)}{\sqrt{T}}}\\
				& \geq & \frac{1}{T}\sum_{t=1}^T \big[\alpha_t r^+(\w^*; \x_t, y_t)
				\,+\, \beta_t r^-(\w^*; \x_t, y_t) \,-\, \Psi^*(\alpha_t, \beta_t)\big] ~-~ \softO{\frac{r(\bar\epsilon)}{\sqrt{T}}},
\end{eqnarray*}
since $\epsilon(t), \alpha_t, \beta_t \geq 0$. Again using an online-batch conversion argument \cite{online-batch-single} we get w.h.p,
\begin{eqnarray}
\frac{1}{T}\sum_{t=1}^T \big[\alpha_t r^+(\w_t; \x_t, y_t) \,+\, \beta_t r^-_t(\w_t; \x_t, y_t) \,-\, \Psi^*(\alpha_t, \beta_t)\big] \geq \Psi\big( P(\w^*), \,N(\w^*)\big)  ~-~ \softO{\frac{r(\bar\epsilon)}{\sqrt{T}}}.\label{eqn:giga-22}
\end{eqnarray}
Combining Eq. (\ref{eqn:giga-11}) and (\ref{eqn:giga-22}) gives us
\[
\Psi\big(P(\barw), ~  N(\barw) \big) \geq \Psi\big( P(\w^*), \,N(\w^*)\big)  ~-~ \softO{\frac{r(\bar\epsilon)}{\sqrt{T}}} - \delta_\Psi\bigg(\softO{\frac{1}{\sqrt{T}}}\bigg) \,-\, \delta_\Psi(\bar\epsilon)
\]

For G-mean, $\delta_\Psi(x) = \sqrt x$, and by an application of Lemma~\ref{lem:dsr-stab},we have $r(\epsilon) = O(1/\sqrt\epsilon)$. Thus we have
\[
\Psi\big(P(\barw), ~  N(\barw) \big) \geq \Psi\big( P(\w^*), \,N(\w^*)\big)  ~-~ \softO{\frac{1}{\sqrt{T\bar\epsilon}}} - \softO{\frac{1}{\sqrt[4]{T}}} \,-\, \sqrt{\bar\epsilon}
\]
For $\bar\epsilon = \O{\frac{1}{\sqrt[4]T}}$, we get
\[
\Psi\big(P(\barw), ~  N(\barw) \big) \geq \Psi\big( P(\w^*), \,N(\w^*)\big)  ~-~ \softO{\frac{1}{\sqrt[4]{T}}}
\]
This can be achieved with $\epsilon(t) = \frac{1}{\sqrt[4]t}$.
\end{proof}

\section{Proof of Theorem~\ref{thm:altmax-conv}}
\label{app:thm-altmax-conv-proof}
\begin{repthm}{thm:altmax-conv}
Let Algorithm~\ref{alg:am} be executed with a performance measure $\Pf_{(\ba,\bb)}$ and reward functions that offer values in the range $[0,m)$. Let $\Pf^\ast := \sup_{\w\in\W}\Pf_{(\ba,\bb)}(\w)$. Also let $\Delta_t = \Pf^\ast - \Pf_{(\ba,\bb)}(\w_t)$ be the excess error for the model $\w_t$ generated at time $t$. Then there exists a value $\eta(m) < 1$ such that for $\Delta_t \leq \Delta_0\cdot\eta(m)^t$.
\end{repthm}
\begin{proof}
In order to be generic in its treatment, the proof will require the following regularity conditions on the performance measure
\begin{enumerate}
	\item $b_0 \neq 0$
	\item $\alpha - \Pf(\w)\cdot\gamma \geq 0$ for all $\w \in \W$
	\item $\beta - \Pf(\w)\cdot\delta \geq 0$ for all $\w \in \W$
	\item $-1 < f \leq \gamma\cdot P(\w) + \delta\cdot N(\w) \leq g$ for all $\w \in \W$
\end{enumerate}
Define $e_t := V(\w_{t+1},v_t) - v_t$. Then we can state the following lemmata which together yield the convergence bound proof.

\begin{lem}
$\frac{e_t}{1+f} \geq \Pf^\ast - v_t$
\end{lem}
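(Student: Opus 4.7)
The plan is to exploit the identity that links the valuation function directly to the performance measure, apply it at the optimal classifier $\w^\ast$, and then use the optimality of $\w_{t+1}$ for the maximization of $V(\cdot, v_t)$.

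First, I would establish the basic identity: by substituting the definitions of $c,\alpha,\beta,\gamma,\delta$ and clearing denominators, for any $\w \in \W$ and any $v \in \R$,
\[
\Pf_{(\ba,\bb)}(\w) - v \;=\; \frac{V_{(\ba,\bb)}(\w,v) - v}{1 + \gamma\cdot P(\w) + \delta\cdot N(\w)}.
\]
This is just the algebraic manifestation of Lemma~\ref{lem:val-fn}, and it holds whenever the denominator is nonzero (which is guaranteed by regularity condition 4, since $1+f>0$).

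Next, I would instantiate the identity at $\w = \w^\ast := \arg\max_{\w\in\W} \Pf_{(\ba,\bb)}(\w)$ and at $v = v_t$, obtaining
\[
V(\w^\ast,v_t) - v_t \;=\; \br{\Pf^\ast - v_t}\br{1 + \gamma\cdot P(\w^\ast) + \delta\cdot N(\w^\ast)}.
\]
Because $\w_{t+1}$ maximizes $V(\cdot,v_t)$ over $\W$, we have $e_t = V(\w_{t+1},v_t) - v_t \geq V(\w^\ast,v_t) - v_t$. Now note that $v_t = \Pf_{(\ba,\bb)}(\w_t) \leq \Pf^\ast$, so the factor $\Pf^\ast - v_t$ is non-negative; on the other hand, regularity condition 4 yields $1 + \gamma\cdot P(\w^\ast) + \delta\cdot N(\w^\ast) \geq 1 + f > 0$. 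Multiplying the non-negative quantity $\Pf^\ast - v_t$ by the larger factor and dividing through by $1+f$ gives the claim.

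The only genuinely non-routine step is the identity in the first paragraph; once it is in hand, the rest is a one-line comparison that combines the optimality of $\w_{t+1}$ with the sign and magnitude bound from regularity condition 4. I do not expect any obstacle; the argument is entirely algebraic and relies on no stochastic or concavity machinery.
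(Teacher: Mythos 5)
Your proof is correct and follows essentially the same route as the paper's: both rest on the identity $V(\w,v)-v = \br{\Pf_{(\ba,\bb)}(\w)-v}\br{1+\gamma\cdot P(\w)+\delta\cdot N(\w)}$ together with the lower bound $1+\gamma\cdot P(\w)+\delta\cdot N(\w) \geq 1+f > 0$ from regularity condition 4. The only difference is presentational --- you argue directly from the optimality of $\w_{t+1}$ and the sign of $\Pf^\ast - v_t$, whereas the paper runs the same chain of inequalities as a proof by contradiction.
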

\begin{proof}
Assume that for some $\w^\ast$, $\Pf(\w^\ast) = v_t + e_t + e'$ where $e' > 0$. Then we have
\begin{eqnarray*}
V(\w^\ast,v_t) &=& \br{\frac{e_t}{1+f} + e'}(1 + \gamma\cdot P(\w^\ast) + \delta\cdot N(\w^\ast)) - e_t\\
			   &\geq& \br{\frac{e_t}{1+f} + e'}(1 + f) - e_t\\
			   &=& e'(1 + f) > 0,
\end{eqnarray*}
which contradicts the fact that no classifier can achieve a valuation greater than $v_t + e_t$ at level $v_t$, thus proving the desired result.
\end{proof}

\begin{lem}
For any $\w$ that achieves $V(\w,v) = v + e$ such that $e \geq 0$, we have
\[
\Pf(\w) \geq v + \frac{e}{g+1}
\]
\end{lem}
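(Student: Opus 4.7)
My plan is to exploit the explicit relation between the valuation $V$ and the performance measure $\Pf$ that comes from the canonical form, together with the upper bound $g$ on $\gamma\cdot P(\w) + \delta\cdot N(\w)$ stated in the regularity conditions.

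First, let $t := \Pf(\w)$. By Lemma~\ref{lem:val-fn} (or directly by clearing denominators in the canonical form of $\Pf_{(\ba,\bb)}$), $\Pf(\w)=t$ is equivalent to $V(\w,t)=t$, i.e.\
\[
c + (\alpha - t\gamma)P(\w) + (\beta - t\delta)N(\w) \;=\; t.
\]
On the other hand, by hypothesis,
\[
c + (\alpha - v\gamma)P(\w) + (\beta - v\delta)N(\w) \;=\; v + e.
\]

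The key step will be to subtract these two identities. The terms $c$, $\alpha P(\w)$, and $\beta N(\w)$ all cancel, leaving a clean linear identity in $(t-v)$:
\[
(t-v)\bigl(\gamma P(\w) + \delta N(\w)\bigr) \;=\; (v+e) - t,
\]
which rearranges to
\[
(t-v)\bigl(1 + \gamma P(\w) + \delta N(\w)\bigr) \;=\; e.
\]
Applying the regularity bound $\gamma P(\w) + \delta N(\w) \leq g$ (and noting $e\geq 0$ so $t\geq v$, which guarantees the factor $(1+\gamma P+\delta N)\in(0,1+g]$ is positive and hence a valid divisor), we obtain
\[
t - v \;=\; \frac{e}{1 + \gamma P(\w) + \delta N(\w)} \;\geq\; \frac{e}{g+1},
\]
which yields the claim $\Pf(\w) \geq v + e/(g+1)$.

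There is no real obstacle here: the whole argument is a one-line algebraic manipulation once one writes $\Pf(\w)=t$ via the identity $V(\w,t)=t$ and subtracts from the given equation $V(\w,v)=v+e$. The only point requiring minor care is verifying that the denominator $1+\gamma P(\w)+\delta N(\w)$ is strictly positive (guaranteed by the regularity assumption $f > -1$), so that dividing through is legitimate and the direction of the inequality is preserved.
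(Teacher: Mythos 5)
Your proof is correct and is essentially the same argument as the paper's: both exploit the affine dependence of $V(\w,\cdot)$ on the level together with the regularity bound $f \leq \gamma P(\w) + \delta N(\w) \leq g$, the paper by guessing the level $v' = v + \tfrac{e}{g+1}$ and verifying $V(\w,v') \geq v'$ before invoking pseudo-linearity, you by subtracting $V(\w,\Pf(\w)) = \Pf(\w)$ from $V(\w,v) = v+e$ to get the exact identity $e = (\Pf(\w)-v)(1+\gamma P(\w)+\delta N(\w))$ and then bounding the factor. The only nitpick is that positivity of $1+\gamma P(\w)+\delta N(\w)$ comes from the standing assumption $f > -1$ (not from $e \geq 0$), which you do note at the end.
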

\begin{proof}
Let $v' = v + \frac{e}{g+1}$. We will show that $V(\w,v') \geq v'$ which will establish the result by pseudo-linearity. We have
\begin{eqnarray*}
V(\w,v') - v' &=& c + (\alpha - v'\gamma)\cdot P(\w) + (\beta - v'\delta)\cdot N(\w) - v'\\
			  &=& c + (\alpha - v\gamma)\cdot P(\w) + (\beta - v\delta)\cdot N(\w) - v' - \frac{e}{g+1}(\gamma\cdot P(\w) + \delta\cdot N(\w))\\
			  &=& v + e - v' - \frac{e}{g+1}(\gamma\cdot P(\w) + \delta\cdot N(\w))\\
			  &\geq& v + e - v' - \frac{ge}{g+1} = 0,
\end{eqnarray*}
where we have used the bounds on $\gamma\cdot P(\w) + \delta\cdot N(\w)$ and the fact that $1 + g > 0$.
\end{proof}

Given the above results we can establish the convergence bound. More specifically, we can show the following: let $\Delta_t = \Pf^\ast - \Pf(\w_t)$. Then we have
\[
\Delta_{t+1} \leq \frac{g-f}{g+1}\cdot\Delta_t
\]
To see this, consider the following
\begin{eqnarray*}
\Delta_{t+1} &=& \Pf^\ast - \Pf(\w_{t+1}) \leq \Pf^\ast - \br{v_t + \frac{e_t}{g+1}} \leq \Pf^\ast - \br{v_t + \frac{(1+f)(\Pf^\ast - v_t)}{g+1}}\\
			 &=& \Pf^\ast - \br{\Pf(\w_t) + \frac{(1+f)(\Pf^\ast - \Pf(\w_t))}{g+1}} = \Delta_t - \frac{1+ f}{g + 1}\cdot\Delta_t = \frac{g-f}{g+1}\cdot\Delta_t,
\end{eqnarray*}
which proves the result. Notice that Table~\ref{tab:pseudolinear-perf-list} gives the rates of convergence for the different performance measures by calculating bounds on the value of $\frac{g-f}{g+1}$ for those performance measures.
\end{proof}

\section{An analysis of the \am Algorithm under Inexact Maximizations}
\label{app:noisy-am}
For this and the next section, we will, for the sake of simplicity, we will focus only on the F-measure for $\beta = 1$ and $p = 1/2$ so that $\theta = 1$. For this setting, the F-measure looks like the following: $F(P,N) = \frac{2P}{2 + P - N}$, and the valuation function looks like $V(\w,v) = (1-v/2)\cdot P(\w) + v/2\cdot N(\w)$. We shall denote the performance measure as $F(\w)$, and its optimal value as $F^\ast$. We will assume that the reward functions give bounded rewards in the range $[0,m)$.

So far we assumed that Step 4 in the Algorithm \am gave us $\w_{t+1}$ such that
\[
V(\w_{t+1},v_t) = \max_{\w\in\W}\ V(\w,v_t)
\]
Now we will only assume that $\w_{t+1}$ satisfies
\[
V(\w_{t+1},v_t) = \max_{\w\in\W}\ V(\w,v_t) - \epsilon_t
\]
We also assume that the level $v_t$ is only approximated in Step 5 of \am, i.e. using Lemma~\ref{lem:val-fn} we have
\[
v_t = F(\w_t) + \delta_t
\]
where $\delta_t$ is a signed real number.

Given these approximations, we can prove the following results
\begin{lem}
The following hold for the setting described above
\begin{enumerate}
	\item If $\delta_t \leq 0$ then $e_t \geq 0$
	\item If $\delta_t > 0$ then $e_t \geq -\delta_t\br{1 + \frac{m}{2}}$
	\item If $F^\ast < v_t$ (which can happen only if $\delta_t > 0$), then $e_t < 0$
	\item If $e_t  < 0$ then $F^\ast < v_t$
	\item We have
	\begin{enumerate}
		\item If $e_t \geq 0$, then $e_t \geq \br{\frac{2-m}{2}}(F^\ast - v_t)$.
		\item If $e_t < 0$, then $e_t \geq \br{\frac{2+m}{2}}(F^\ast - v_t)$.
	\end{enumerate}
	\item If $V(\w,v) = v + e$, then
	\begin{enumerate}
		\item If $e \geq 0$ then $F(\w) \geq v + \frac{2e}{2+m}$
		\item If $e < 0$ then $F(\w) \geq v + \frac{2e}{2-m}$
	\end{enumerate}
\end{enumerate}
\end{lem}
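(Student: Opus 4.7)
The lemma collects six structural facts about the $F_1$ valuation function with $\theta = 1$, and all of them can be derived from two simple ingredients. First, the identity $V(\w, F(\w)) = F(\w)$, which follows by substituting $u = F(\w) = 2P(\w)/(2 + P(\w) - N(\w))$ into the definition of $V$. Second, the pivot formula $V(\w, u') - V(\w, u) = \tfrac{u' - u}{2}\br{N(\w) - P(\w)}$, coming from the linearity of $V(\w, \cdot)$ in its second argument; together with $P(\w), N(\w) \in [0, m)$ this says $\abs{N - P} < m$, so shifting the level by $\Delta v$ perturbs the valuation by at most $\tfrac{m}{2}\abs{\Delta v}$. These two facts, coupled with Lemma~\ref{lem:val-fn} (which moves back and forth between statements about $F(\w)$ and about $V(\w, v)$), will carry every part.

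For parts 1 and 2, the plan is to apply the pivot formula to $\w = \w_t$ while using $V(\w_t, F(\w_t)) = F(\w_t) = v_t - \delta_t$. When $\delta_t \leq 0$, we have $F(\w_t) \geq v_t$, so Lemma~\ref{lem:val-fn} yields $V(\w_t, v_t) \geq v_t$, and maximality of $\w_{t+1}$ gives $e_t \geq 0$. When $\delta_t > 0$, the shift by $\delta_t$ picks up at most $\delta_t \cdot \tfrac{m}{2}$ from the pivot, which together with the $-\delta_t$ offset produces $V(\w_t, v_t) \geq v_t - \delta_t(1 + \tfrac{m}{2})$ and hence the claimed lower bound on $e_t$. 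Parts 3 and 4 are contrapositives of each other and are immediate applications of Lemma~\ref{lem:val-fn} across all $\w$: $F^\ast < v_t$ means no classifier achieves $V(\w, v_t) \geq v_t$, which is exactly $e_t < 0$.

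Parts 5 and 6 are the two quantitative estimates, and both are proved by the same pivot trick applied at different classifiers. For part 5, pivoting at the optimal $\w^\ast$ with $V(\w^\ast, F^\ast) = F^\ast$ gives
\[
V(\w^\ast, v_t) - v_t = (F^\ast - v_t)\br{1 + \tfrac{N(\w^\ast) - P(\w^\ast)}{2}},
\]
where the multiplier lies in $\br{\tfrac{2 - m}{2}, \tfrac{2 + m}{2}}$. Since $e_t \geq V(\w^\ast, v_t) - v_t$, choosing the correct endpoint of the multiplier interval---the smaller one when $F^\ast - v_t \geq 0$, the larger one when $F^\ast - v_t < 0$---gives cases 5a and 5b, with the sign of $F^\ast - v_t$ itself controlled by the sign of $e_t$ via parts 3 and 4. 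For part 6 the strategy runs in reverse: propose the candidate level $v' = v + \tfrac{2e}{2 \pm m}$, verify $V(\w, v') \geq v'$ by a short calculation using the pivot identity (the residual $V(\w, v') - v'$ collapses to $e \cdot (m + N - P)/(2 + m)$ in 6a and $e \cdot (N - P - m)/(2 - m)$ in 6b, both of which have the right sign), and invoke Lemma~\ref{lem:val-fn} to upgrade this to $F(\w) \geq v'$. The one place where care is required, and the only thing I expect to be genuinely error-prone, is sign bookkeeping: whenever $F^\ast - v_t$ or $e$ is negative, multiplying a one-sided bound on the multiplier by it reverses the direction of the inequality, so the correct endpoint of the multiplier interval must be chosen in each subcase. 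Beyond that the entire argument is routine algebra.
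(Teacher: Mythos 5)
Your proposal is correct and takes essentially the same route as the paper's own proof: both arguments rest on the identity $V(\w,F(\w))=F(\w)$, the linearity of $V(\w,\cdot)$ in the level (so that shifting the level by $\Delta v$ changes the valuation by $\tfrac{\Delta v}{2}\br{N(\w)-P(\w)}$ with $\abs{N-P}<m$), and Lemma~\ref{lem:val-fn} to pass between statements about $F$ and about $V$; your direct derivation of part 5 from $e_t \geq V(\w^\ast,v_t)-v_t$ is just the contrapositive form of the paper's proof by contradiction, and your part-6 residuals match the paper's algebra exactly. Two harmless slips: the multiplier in your part-5 display should be $1+\tfrac{P(\w^\ast)-N(\w^\ast)}{2}$ rather than $1+\tfrac{N(\w^\ast)-P(\w^\ast)}{2}$ (immaterial, since both lie in $\br{\tfrac{2-m}{2},\tfrac{2+m}{2}}$ and the endpoint selection is unchanged), and parts 3 and 4 are converses of one another --- the two directions of the biconditional in Lemma~\ref{lem:val-fn} --- not contrapositives.
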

\begin{proof}
We give the proof in parts
\begin{enumerate}
	\item If $\delta_t \leq 0$ then this means that there exists a $\w$ such that $F(\w) \geq v_t$. The result then follows from pseudo linearity.
	\item $v_t = F(\w_t) + \delta_t$ gives us, by pseudo linearity of F-measure,
	\[
	(1 - v_t/2)\cdot P(\w_t) + v_t/2\cdot N(\w_t) = v_t - \delta_t\br{1 + \frac{P(\w_t) - N(\w_t)}{2}} \geq v_t - \delta_t\br{1 + \frac{m}{2}}.
	\]
	The bound on $e_t$ now follows from its definition.
	\item Suppose $e_t \geq 0$ then by pseudo linearity of F-measure, we have, for some $\w$, $V(\w,v_t) \geq v_t$ which means $F(\w) \geq v_t$ which contradicts the assumption.
	\item Suppose there exists $\w^\ast$ with $F(\w^\ast) = v_t + e'$ with $e' \geq 0$ then we have
	\[
	(1 - v_t/2)\cdot P(\w^\ast) + v_t/2\cdot N(\w^\ast) = v_t + e'\br{1 + \frac{P(\w^\ast) - N(\w^\ast)}{2}} \geq 0,
	\]
	which contradicts the fact that $e_t < 0$.
	\item Part (a) is simply Lemma 10. For part (b), we will prove that $F^\ast \leq v_t + \frac{2e_t}{2+m}$. Since $\frac{2}{2+m} > 0$, the result will follow. Assume the contrapositive that some $\w^\ast$ achieves $F(\w^\ast) = v_t + \frac{2e_t}{2+m} + e'$ for some $e' > 0$. Using the pseudo linearity of F-measure (and using the shorthand $v' = v_t + \frac{2e_t}{2+m} + e'$), this can be expressed as
\[
(1 - v'/2)\cdot P(\w^\ast) + v'/2\cdot N(\w^\ast) = v'
\]
where  for some $e' > 0$. Then we have
\begin{align*}
(1 - v_t/2)\cdot P(\w^\ast) + v_t/2\cdot N(\w^\ast) - v_t - e_t &= v' - v_t - e_t + \frac{1}{2}\br{\frac{2e_t}{2+m} + e'}(P(\w^\ast) - N(\w^\ast))\\
																	&= \frac{2e_t}{2+m} + e' - e_t + \frac{1}{2}\br{\frac{2e_t}{2+m} + e'}(P(\w^\ast) - N(\w^\ast))\\
																	&\geq \frac{2e_t}{2+m} + e' - e_t + \frac{m}{2}\br{\frac{2e_t}{2+m} + e'}\\
																	&= e'\br{1 + \frac{m}{2}} + e_t\br{\frac{2}{2+m} - 1 + \frac{m}{2+m}}\\
																	&= e'\br{1 + \frac{m}{2}} > 0,
\end{align*}
where we have assumed that $e'$ is chosen small enough so that $\frac{2e_t}{2+m} + e' < 0$ still and used the fact that $P(\w^\ast) - N(\w^\ast) \leq m$.
	\item Part (a) is simply Lemma 11. To prove part (b), we let $v' = v + \frac{2e}{2 - m}$, then we have 
\begin{eqnarray*}
(1 - \frac{v'}{2})\cdot P(\w) + \frac{v'}{2}\cdot N(\w) - v' &=& (1 - \frac{v}{2})\cdot P(\w) + \frac{v}{2}\cdot N(\w) - v' + \frac{e}{2-m}(N(\w) - P(\w))\\
														 &\geq& (1 - \frac{v}{2})\cdot P(\w) + \frac{v}{2}\cdot N(\w) - v' + \frac{me}{2-m}\\
														 &=& v + e - \br{v + \frac{2e}{2 - m}} + \frac{me}{2 - m}\\
														 &=& e\br{1 - \frac{2}{2 - m} + \frac{m}{2 - m}}\\
														 &=& 0,
\end{eqnarray*}
where the second inequality follows since $N(\w) - P(\w) \leq m$ and $e < 0$ by using the bounds on the reward functions. This proves the result.
\end{enumerate}
\end{proof}

\subsection{Convergence analysis}
We have the following cases with us
\begin{enumerate}
	\item Case 1 ($\delta_t \leq 0$): In this case we are setting $v_t$ to a value less than the F-measure of the current classifier. This should hurt performance - we know that $v_t = F(\w_t) + \delta_t$ which gives us, on applying part (a) of the previous lemma using $F^\ast - v_t = \Delta_t - \delta_t$, the following 
	\[
	e_t \geq \frac{2-m}{2}(\Delta_t - \delta_t).
	\]
	Note that we are guaranteed that $e_t \geq 0$ in this case. Now since the maximization in step 4 is also carried our approximately, we have $V(\w_{t+1},v_t) = v_t + e_t - \epsilon_t$. Now we have two sub cases
	\begin{enumerate}
		\item Case 1.1 ($\epsilon_t \leq e_t$): In this case we can apply part 6(a) of the previous lemma to get the following result
		\[
		\Delta_{t+1} \leq \frac{2m}{2+m}\Delta_t - \frac{2m}{2+m}\delta_t + \frac{2\epsilon_t}{2m}
		\]
		\item Case 1.2 ($\epsilon_t > e_t$): In this case we are actually making negative progress in the maximization step (since we have $V(\w_{t+1},v_t) \leq v_t$) and we can only invoke Lemma 5.6(b) to get
		\[
		\Delta_{t+1} \leq \frac{2\epsilon_t}{2-m}
		\]
		Note that the above result should not be interpreted as a one shot step to a very good classifier. The above result holds along with the condition that $\epsilon_t > e_t$. Thus the performance of the classifier is lower bounded by $e_t$ which depends on how far the current classifier is from the best.
	\end{enumerate}
	\item Case 2 ($\delta_t > 0$): In this case we are setting $v_t$ to the value higher than the F-measure of the current classifier. This can mislead the classifier and results in the following two sub-cases
	\begin{enumerate}
		\item Case 2.1 ($F^\ast \geq v_t$): In this case we are still setting $v_t$ to a legitimate value, i.e. one that is a valid F-measure for some classifier in the hypothesis class. This can only benefit the next optimization stage (in fact if we set $v_t = F^\ast$, then we would obtain the best classifier in this very iteration!). In this case $e_t \geq 0$ and we can use the analyses of Cases 1.1 and 1.2.
		\item Case 2.2 ($F^\ast < v_t$): In this case we are setting $v_t$ to an illegal value, one that is an unachievable value of F-measure. Consequently, using part 3 of the previous lemma, $e_t < 0$ and using part(b) of the previous lemma we get
		\[
		e_t \geq \frac{2+m}{2}(\Delta_t - \delta_t),
		\]
		which, upon applying part 6(b) of the previous lemma (since $e_t - \epsilon_t \leq e_t < 0$) will give us
		\begin{align*}
		\Delta_{t+1} &\leq \frac{2m}{2-m}(\delta_t - \Delta_t)+ \frac{2\epsilon_t}{2-m}\\
					 &\leq \frac{2m}{2-m}\delta_t + \frac{2\epsilon_t}{2-m}
		\end{align*}
	\end{enumerate}
\end{enumerate}

We can combine the cases together as follows
\begin{align*}
\Delta_{t+1} &\leq \max\bc{\1\bc{\delta \leq 0}\cdot\bc{\frac{2m}{2+m}\Delta_t - \frac{2m}{2+m}\delta_t + \frac{2\epsilon_t}{2+m}}, \1\bc{\epsilon_t > e_t}\cdot\frac{2\epsilon_t}{2-m}, \1\bc{\delta > 0}\cdot\bc{\frac{2m}{2-m}\delta_t + \frac{2\epsilon_t}{2-m}}}\\
			 &\leq \max\bc{\frac{2m}{2+m}\Delta_t + \frac{2m}{2+m}\abs{\delta_t} + \frac{2\epsilon_t}{2+m}, \1\bc{\epsilon_t > e_t}\cdot\frac{2\epsilon_t}{2-m}, \frac{2m}{2-m}\abs{\delta_t} + \frac{2\epsilon_t}{2-m}}\\
			 &\leq \frac{2m}{2+m}\Delta_t + \frac{2m}{2-m}\abs{\delta_t} + \frac{2\epsilon_t}{2-m}
\end{align*}

If we let $\eta = \frac{2m}{2+m}$, $\eta' = \frac{2m}{2-m}$, and $\xi_t = \abs{\delta_t} + \epsilon_t/m$, then this gives us
\[
\Delta_{t+1} \leq \eta\Delta_t + \eta'\xi_t,
\]
which gives us
\[
\Delta_T \leq \eta^T\Delta_0 + \frac{\eta'}{\eta}\cdot\sum_{i=0}^{T-1}\eta^{T-i}\xi_i
\]
This concludes our analysis.

\section{Proof of Theorem~\ref{thm:altmaxsgd-conv}}
\label{app:thm-altmaxsgd-conv-proof}
\begin{repthm}{thm:altmaxsgd-conv}
Let Algorithm~\ref{alg:amsgd} be executed with a performance measure $\Pf_{(\ba,\bb)}$ and reward functions with range $[0,m)$. Let $\eta = \eta(m)$ be the rate of convergence guaranteed for $\Pf_{(\ba,\bb)}$ by the \am algorithm. Set the epoch lengths to $s_e,s'_e = \softO{\frac{1}{\eta^{2e}}}$. Then after $e = \log_{\frac{1}{\eta}}\br{\frac{1}{\epsilon}\log^2\frac{1}{\epsilon}}$ epochs, we can ensure with probability at least $1-\delta$ that $\Pf^\ast - \Pf_{(\ba,\bb)}(\w_e) \leq \epsilon$. Moreover the number of samples consumed till this point is at most $\softO{\frac{1}{\epsilon^2}}$.
\end{repthm}
\begin{proof}
Using Hoeffding's inequality, standard regret and online-to-batch guarantees \cite{online-batch-single,zinkevich}, we can ensure that, if the stream lengths for the Model optimization stage and Challenge level estimation stage procedures are $s_e$ and $s_e'$ respectively, then for some fixed $c > 0$ that is independent of the stream length, we have
\[
\abs{\delta_t} \leq c\cdot\sqrt\frac{\log\frac{1}{\delta}}{s_e'}, \abs{\epsilon_t} \leq c\sqrt\frac{\log\frac{1}{\delta}}{s_e}
\]
Let $T = \log_{\frac{1}{\eta}}\br{\frac{1}{\epsilon}\log^2\frac{1}{\epsilon}}$ and $s_e = \br{\frac{2c}{m}}^2\br{\frac{1}{\eta}}^{2e}\log\frac{T}{\delta}$ and $s'_e = 4c^2\br{\frac{1}{\eta}}^{2e}\log\frac{T}{\delta}$ - this gives us, for each $e$, with probability at least $1 - \delta/T$,
\[
\xi_e \leq \eta^e
\]
Thus, using a union bound, with probability at least $1 - \delta$, we have, by the discussion in the previous section,
\begin{align*}
\Delta_T &\leq \eta^T\Delta_0 + \frac{\eta'}{\eta}\sum_{i=0}^{T-1}\eta^{T-i}\xi_i \leq \eta^T\Delta_0 + \frac{\eta'}{\eta}T\eta^T\\
		 &\leq \epsilon\Delta_0\log^{-2}\frac{1}{\epsilon} + \frac{\eta'}{\eta}\log_{\frac{1}{\eta}}\br{\frac{1}{\epsilon}\log^2\frac{1}{\epsilon}}\epsilon\log^{-2}\frac{1}{\epsilon}\\
		 &\leq \epsilon\br{\Delta_0 + \frac{\eta'}{\eta\log\frac{1}{\eta}}},
\end{align*}
where the last step follows from the fact that for any $\epsilon < 1/e^2$, we have
\[
\log\br{\frac{1}{\epsilon}\log^2\frac{1}{\epsilon}} \leq \log^{2}\frac{1}{\epsilon}
\]
Let $d = \br{\Delta_0 + \frac{\eta'}{\eta\log\frac{1}{\eta}}}$ so that we can later set $\epsilon' = \epsilon/d$, and $s = 4c^2\br{1 + \frac{1}{m^2}}$ so that $s_e + s'_e = s\br{\frac{1}{\eta}}^{2e}\log\frac{T}{\delta}$. The total number of samples required can then be calculated as
\[
\sum_{e=1}^Ts_e + s'_e = s\log\frac{T}{\delta}\sum_{e=1}^T\br{\frac{1}{\eta}}^{2e} = s\log\frac{T}{\delta}\frac{1}{1 - \eta^2}\br{\frac{1}{\eta^2}}^T \leq s\log\frac{T}{\delta}\frac{1}{1 - \eta^2}\frac{1}{\epsilon^2}\log^4\frac{1}{\epsilon}
\]
This gives the number of samples required as
\[
\O{\frac{1}{\epsilon^2}\log^4\frac{1}{\epsilon}\br{\log\log\frac{1}{\epsilon} + \log\frac{1}{\delta}}},
\]
to get an $\epsilon$-accurate solution with confidence $1-\delta$.
\end{proof}

\end{document}